\documentclass{article}

\usepackage[main,preprint]{neurips_2026}
\usepackage[utf8]{inputenc}
\usepackage[T1]{fontenc}
\usepackage{hyperref}
\usepackage{url}
\usepackage{booktabs}
\usepackage{amsfonts}
\usepackage{amsmath}
\usepackage{amssymb}
\usepackage{amsthm}
\usepackage{nicefrac}
\usepackage{microtype}
\usepackage{xcolor}
\usepackage{algorithm}
\usepackage{algpseudocode}
\usepackage{tikz}
\usetikzlibrary{positioning, arrows.meta, shapes.geometric, fit, backgrounds, calc, decorations.pathreplacing}
\usepackage{pgfplots}
\pgfplotsset{compat=1.17}
\usepackage{caption}
\usepackage{subcaption}

\makeatletter
\renewcommand\And{%
\end{tabular}\hskip 1em \@plus.17fil%
\begin{tabular}[t]{c}}
\makeatother

\newtheorem{definition}{Definition}
\newtheorem{proposition}{Proposition}
\newtheorem{remark}{Remark}

\title{SKG-Eval: Stateful Evaluation of Multi-Turn Dialogue via Incremental Semantic Knowledge Graphs}

\author{
	\textbf{Avijit Shil}$^{1}$ \hspace{0.25em}
	\textbf{Suman Samui}$^{2}$\\[1ex]
	$^{1}$Maulana Abul Kalam Azad University of Technology, West Bengal, India\\
	$^{2}$National Institute of Technology Durgapur, West Bengal, India\\[0.5ex]
	\texttt{avijitshil52460@gmail.com}, \hspace{0.25em}
	\texttt{ssamui@nitdgp.ac.in}
}

\begin{document}
	
	\maketitle
	
	\begin{abstract}

	Evaluating multi-turn dialogue systems poses fundamental challenges: the quality of each response depends not only on the immediate prompt but on a growing context of prior commitments, entities, and claims that the model is implicitly bound to. Existing automatic evaluators, including LLM-as-a-judge protocols and embedding-based metrics, largely operate on flat or turn-isolated representations, and therefore fail to reliably detect cross-turn failure modes such as contradiction, topic drift, and entity inconsistency.
	To address this, we propose \textbf{SKG-Eval}, a quasi-deterministic and interpretable evaluation framework that models dialogue as an evolving \emph{Semantic Knowledge Graph} (SKG) of entities, relations, and commitments across turns. At each turn, the graph is incrementally updated via structured triple extraction, and three complementary signals are computed: (i) \textit{local relevance}, measuring alignment with the current prompt and optional reference; (ii) \textit{historical consistency}, quantifying how newly introduced information connects to prior conversational state via graph-anchored and embedding-based signals; and (iii) \textit{logical coherence}, assessed by a geometric contradiction engine that detects cross-turn conflicts without relying on NLI models or LLM judges. These signals are fused using a regime-adaptive mechanism and aggregated into a length-invariant session score via recency-weighted trend analysis.
	Across multiple benchmarks, SKG-Eval achieves higher correlation with human judgments and substantially improves recall of long-range inconsistencies, particularly in extended conversations where existing evaluators degrade. In addition, SKG-Eval produces explicit contradiction certificates and yields deterministic scores given fixed inputs, enabling reproducible and auditable evaluation.
	Our results suggest that externalized state tracking via structured representations is a principled and scalable alternative to implicit reasoning in LLM-based evaluators for long-horizon dialogue systems.

	\end{abstract}
	
	\section{Introduction}
	\label{sec:intro}
	
	Large language models (LLMs) are increasingly deployed as multi-turn conversational agents in domains where the cost of a contradicted prior statement, a forgotten constraint, or a silently shifted topic ranges from user frustration to material harm. Yet the dominant paradigm for automatically evaluating dialogue still rests on \emph{turn-isolated} signals---a single response is scored against a single prompt and (sometimes) a reference, with conversational history compressed into a prefix that the evaluator is trusted to read \citep{lin2023llmeval,zhang2024comprehensive,mendonca2024ecoh}. This paradigm has known and increasingly visible failure modes. As soon as conversations grow beyond a few turns, models drop substantial competence \citep{kwan2024mteval,laban2025lostinconv}; they make assumptions in early turns that they later contradict \citep{li2026consistencylrm}; and frontier judges trained or prompted on short contexts do not reliably surface these errors at the session level \citep{sirdeshmukh2025multichallenge}.
	
	The root cause is that conversational quality is intrinsically \emph{stateful and temporal}. A turn that looks locally fluent and on-topic may nonetheless be wrong because, two turns earlier, the assistant claimed the opposite; or because it has slowly drifted off the user's actual question; or because it has substituted a new value for a fact already established. Capturing these failure modes requires the evaluator itself to maintain a structured representation of the conversation's commitments and to reason explicitly about how a new turn relates to that commitment store. LLM-as-a-judge mitigates this only partially: it pushes the burden of stateful reasoning onto a black-box judge whose attention pattern over long histories is itself unreliable, whose verdicts are non-deterministic, and whose contradiction recall on paraphrased and numerical conflicts is poor \citep{ike2025humanvsai}.
	
 We argue for an alternative: explicit, externalized state. We propose \textbf{SKG-Eval}, an evaluation framework that incrementally builds a typed, time-stamped \emph{Semantic Knowledge Graph} from the conversation as it unfolds, and scores each new turn against this graph rather than against a flat prefix. Three signals are extracted at every turn: a \emph{local relevance} score that triangulates the response against the prompt and (when available) a reference, a \emph{historical consistency} score that measures how the new turn's entities and facts attach to the existing graph, and a \emph{logical coherence} score produced by a purely geometric contradiction engine that compares the current turn's edges to its historical edges through negation flips, antonym pairs, numeric mismatches, and combined relation/object divergence. These signals are fused under a regime-adaptive weighting and aggregated across the session via a recency-weighted regression with a length-adaptive trend coefficient. 

To summarize our contributions:

\begin{itemize}
	
	\item \textbf{Stateful Dialogue Evaluation via Explicit Semantic Memory.} 
	We formulate multi-turn dialogue evaluation as reasoning over an evolving \textit{Semantic Knowledge Graph} (SKG) that explicitly represents entities, relations, and conversational commitments across turns. Through this externalized semantic state, cross-turn dependencies and long-range conversational consistency can be analyzed in a structured manner.
	
	\item \textbf{Geometric Contradiction Engine with Revision Awareness.} 
	A geometry-driven contradiction detection framework is proposed for graph-structured semantic representations. The engine detects inconsistencies through structured comparison of relations and objects, including negation reversals, antonymic relations, numeric mismatch, and relation-consistent object divergence. Revision-aware filtering is further incorporated to avoid penalizing legitimate conversational updates. As a result, interpretable contradiction certificates are produced without requiring NLI models or LLM judges during scoring.
	
	\item \textbf{Graph-Anchored Historical Consistency Modeling.} 
	We introduce a graph-anchored consistency metric that evaluates whether newly introduced information remains semantically connected to prior conversational state. A complementary session-anchor mechanism is also designed to capture higher-level thematic continuity across turns.
	
	\item \textbf{Robust Local Relevance through Multi-Signal Semantic Alignment.} 
	A triangulated relevance metric is developed that jointly considers prompt alignment and optional reference coverage, together with adaptive fallback mechanisms for short or reference-free responses. This design improves robustness relative to single-signal semantic similarity measures.
	
	\item \textbf{Regime-Adaptive Fusion and Session-Level Aggregation.} 
	We develop a regime-aware fusion strategy that dynamically weights relevance, consistency, and logical coherence according to response characteristics. At the session level, a recency-weighted regression-based aggregation mechanism is introduced to capture both quality level and temporal degradation trends across long conversations.
	
	\item \textbf{Interpretable and Quasi-Deterministic Evaluation.} 
	SKG-Eval provides explicit audit trails through per-turn scores, semantic anchors, and contradiction certificates. Empirically, strong alignment with human judgments is achieved while improved sensitivity to long-range conversational inconsistency and semantic drift is maintained across extended dialogue sessions.
	
\end{itemize}
	
	The remainder of the paper is organized as follows. Section~\ref{sec:related} surveys multi-turn evaluation, LLM-as-judge, dialogue coherence, and knowledge-graph-augmented evaluation, identifying the precise gap that SKG-Eval fills. Section~\ref{sec:problem} formalizes the problem and states the desiderata. Section~\ref{sec:method} develops the proposed framework component by component, with formal definitions, propositions, algorithmic descriptions, and complexity analysis. Section~\ref{sec:experiments} reports empirical results.
	
	\section{Related Work}
	\label{sec:related}
	
	\paragraph{Multi-turn dialogue benchmarks.} An initial suite of benchmarking tests has demonstrated that the single-turn estimation technique overstates dialogue abilities. MT-Bench \citep{zheng2023mtbench} proposed two-turn open prompts, where GPT-4 acts as a judge; nevertheless, its limited time horizon limits its ability to detect far-reaching failure cases. MT-Eval \citep{kwan2024mteval} expanded upon the former by considering four types of interactions—recollection, extension, refinement, and follow-ups—with significant drops in multi-turn performance unrelated to single-turn estimates, citing distance from relevant information and error propagation as key causes. MINT \citep{wang2024mint} evaluated tools usage and feedback handling in multi-turn iterations; MultiChallenge \citep{sirdeshmukh2025multichallenge} increased problem complexity so that advanced models fail to reach above 50\% accuracy on multi-turn tasks that involve instruction following, context assignment, and contextual reasoning. MT-dyna \citep{gao2026mtdyna} presented dynamic questioning techniques based on conversation history and intentions. Multi-turn domain-specific tests include medical consultation evaluators \citep{liao2023medical} and psychological dialogue benchmark sets such as PsycoLLM \citep{hu2025psycollm}. Large-scale simulation tests \citep{laban2025lostinconv} have validated that LLMs get "lost" once they make an initial wrong assumption; Li et al. \citet{li2026consistencylrm}  showed that even reasoning models will disregard true answers when faced with multi-turn challenges. These benchmarks have helped identify failure cases, while SKG-Eval attempts to measure them.
	
	\paragraph{Reference-free and LLM-as-judge evaluation.} A second line replaces n-gram metrics with neural reference-free evaluators. LLM-Eval \citep{lin2023llmeval} unifies multiple dimensions in a single prompt; \citet{zhang2024comprehensive} analyze 30 LLMs across 12 meta-evaluation datasets, exposing both promising correlations with human judgment and significant brittleness under adversarial perturbations. ECoh \citep{mendonca2024ecoh} distills GPT-3.5 into a small open evaluator for turn-level coherence in five languages. BotChat \citep{duan2024botchat} uses an LLM judge to assess pairwise dialogue generation quality. \citet{ike2025humanvsai} compare GPT-4o to human judges across seven KPIs, reporting that GPT-4o handles factuality and commonsense well but struggles with redundancy and self-contradiction---precisely the cross-turn failures SKG-Eval targets. LLM Comparator \citep{kahng2025comparator} addresses interpretability of side-by-side judge outputs but does not change the underlying judging mechanism. Across this literature, the evaluator's state is implicit in the model's attention; whether the judge actually \emph{tracked} the prior turns is unfalsifiable.
	
	\paragraph{Knowledge graphs in dialogue and reasoning evaluation.} Knowledge-graph-grounded dialogue systems are a long-standing line of work, but their use \emph{as evaluators} of free-form multi-turn LLM dialogue is far less developed. The closest benchmarks are surveyed in \citep{li2026survey,guan2026survey}. \citet{yao2026measure} quantify \emph{benchmark} quality (hardness, separability, diversity) but do not address per-conversation quality. To our knowledge, no prior work integrates incremental triple extraction, time-stamped graph state, geometric contradiction detection, and session-level temporal aggregation into a single evaluation framework.
	
	\paragraph{Research gap.} Across (a) benchmarks, which surface failure modes but do not measure them mechanistically; (b) LLM-as-judge protocols, which trust the judge's implicit state and inherit its non-determinism and weak contradiction recall; and (c) reference-based metrics, which score isolated turns---there is no evaluator that (i) maintains an explicit, time-stamped state of the conversation's factual commitments, (ii) detects cross-turn contradictions deterministically and interpretably without an LLM judge in the inner loop, and (iii) aggregates per-turn quality into a session-level metric in a length-invariant way. SKG-Eval is designed to fill exactly this gap.

	\section{Problem Formulation}
	\label{sec:problem}
	
	\paragraph{Dialogue and turn structure.} A dialogue is a sequence of turns $\mathcal{D} = (\tau_1, \tau_2, \ldots, \tau_T)$, where each turn $\tau_t = (q_t, r_t, r^*_t)$ consists of a user prompt $q_t$, an assistant response $r_t$ generated by the system under evaluation, and an optional reference response $r^*_t$ used only when scoring local relevance. We denote the dialogue prefix up to and including turn $t$ by $\mathcal{D}_{1:t}$.
	
	\paragraph{Evaluation as a sequential decision problem.} The goal of an automatic evaluator is to produce, at each turn $t$, a quality score $Q_t \in [0,1]$ together with a session-level summary $\mathcal{S}(\mathcal{D}) \in [0,1]$. We require:
	\begin{itemize}
		\item \textbf{Causality.} $Q_t$ depends only on $\mathcal{D}_{1:t}$. The evaluator may not consult future turns.
		\item \textbf{Statefulness.} $Q_t$ depends on the entire prefix, not only on $\tau_t$. Specifically, the evaluator must be able to detect cross-turn failures in which $r_t$ in isolation appears acceptable but $(r_1,\ldots,r_t)$ is not.
		\item \textbf{Determinism.} Given the same dialogue prefix and the same model parameters (embedding model, extractor), $Q_t$ must be reproducible.
		\item \textbf{Length invariance.} The session-level summary $\mathcal{S}(\mathcal{D})$ should not be artificially inflated or deflated by session length alone.
		\item \textbf{Interpretability.} For any low score, the evaluator should expose the structural cause (which entity, which prior turn, which contradiction class).
	\end{itemize}
	
	\paragraph{Failure modes.}
	We consider six cross-turn conversational failure modes:
	\begin{itemize}
		
		\item \textbf{(F1) Direct contradiction:}
		A response conflicts with a previously asserted fact for the same subject or attribute.
		
		\item \textbf{(F2) Numeric/value substitution:}
		A different numeric or categorical value is assigned to an earlier subject--predicate pair.
		
		\item \textbf{(F3) Antonymic flip:}
		A previously asserted relation is reversed through an antonymic transformation (e.g., ``increases'' vs ``decreases'').
		
		\item \textbf{(F4) Topic drift:}
		New entities or relations are introduced without semantic grounding in prior conversational state.
		
		\item \textbf{(F5) Local irrelevance:}
		The response fails to address the current user query.
		
		\item \textbf{(F6) Silent forgetting:}
		Previously established constraints or commitments are omitted without acknowledgement.
		
	\end{itemize}
	F5 is captured by local relevance; F4 and F6 by historical consistency; F1--F3 by the logical coherence engine. F6 is partially captured by F4 in our framework and fully recovered through structural inspection of the graph.
	
	\paragraph{State representation.} We represent the evaluator's state at turn $t$ by an incremental Semantic Knowledge Graph $G_t = (V_t, E_t)$, defined formally in Section~\ref{sec:method}\cite{Mokg2025}. The graph is a directed multigraph whose nodes are entity labels, whose edges carry typed metadata (relation, attribute, intent, property type, turn id), and whose state at time $t$ summarizes all factual commitments made up to turn $t$ together with semantic-similarity scaffolding induced by an embedding model.
	
	\paragraph{Quality functional.} We posit a per-turn factorization
	\begin{equation}
		Q_t \;=\; \mathcal{F}\!\left(\, S^{\text{loc}}_t,\, S^{\text{cons}}_t,\, S^{\text{log}}_t \,;\, \theta_t \,\right),
		\label{eq:quality_functional}
	\end{equation}
	where $S^{\text{loc}}_t, S^{\text{cons}}_t, S^{\text{log}}_t \in [0,1]$ are the local-relevance, historical-consistency, and logical-coherence scores respectively, and $\theta_t$ is a regime-adaptive weighting selected by simple statistics of $(q_t, r_t)$. The session-level summary is
	\begin{equation}
		\mathcal{S}(\mathcal{D}) \;=\; \mathcal{A}\!\left(\, Q_1,\ldots,Q_T \,;\, \boldsymbol{w}_T \,\right),
		\label{eq:session_aggregator}
	\end{equation}
	where $\mathcal{A}$ is the recency-weighted aggregator of Section~\ref{sec:method:aggregation} and $\boldsymbol{w}_T$ the recency weights. The remainder of the paper develops $\mathcal{F}$, $\mathcal{A}$, and the state $G_t$.

	\begin{table}[t]
		\caption{Key notation used in SKG-Eval.}
		\label{tab:notation}
		\centering
		\small
		\begin{tabular}{@{}ll@{}}
			\toprule
			\textbf{Symbol} & \textbf{Meaning} \\
			\midrule
			$G_t = (V_t, E_t)$ & Semantic  at turn $t$ \\
			$\mathcal{N}_t$ & Nodes introduced at turn $t$ \\
			$\mathcal{C}_t$ & Candidate nodes for contradiction detection \\
			$\phi(\cdot)$ & Sentence embedding function \\
			$\mathrm{cos}(\cdot,\cdot)$ & Cosine similarity \\
			$S^{\text{loc}}_t$ & Local relevance score \\
			$S^{\text{cons}}_t$ & Historical consistency score \\
			$S^{\text{log}}_t$ & Logical coherence score \\
			$Q_t$ & Per-turn quality score \\
			$\mathcal{S}(\mathcal{D})$ & Session-level score \\
			\bottomrule
		\end{tabular}
	\end{table}

	\begin{figure}[t]
		\centering
		\includegraphics[width=\linewidth]{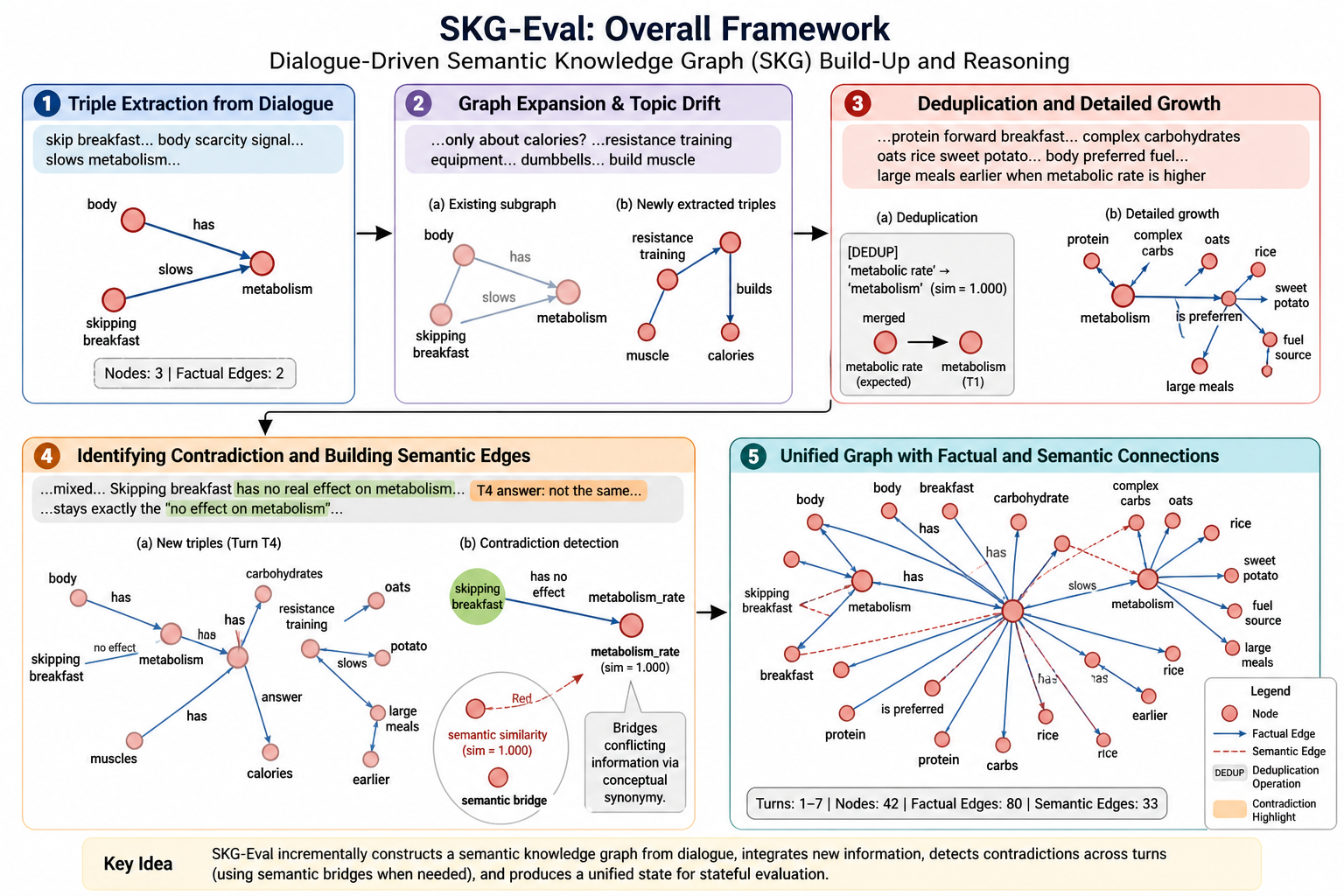}
		\caption{
			End-to-end pipeline of SKG-Eval. Each dialogue turn is converted into structured triples, which are integrated into an evolving Semantic Knowledge Graph (SKG). The framework performs deduplication, semantic linking, and contradiction detection across turns, producing a unified state representation that enables interpretable and state-aware evaluation of multi-turn dialogue.
		}
		\label{fig:overview}
	\end{figure}
	\section{Proposed Method: SKG-Eval}
	\label{sec:method}
	
	As illustrated in Figure~\ref{fig:overview}, SKG-Eval incrementally constructs and reasons over a structured representation of dialogue state. 
	
	We now develop the framework. Section~\ref{sec:method:graph} defines the incremental SKG and its update rule. Sections~\ref{sec:method:slocal}, \ref{sec:method:scons}, and~\ref{sec:method:slogic} develop the three per-turn scores. Section~\ref{sec:method:fusion} defines the regime-adaptive fusion. Section~\ref{sec:method:aggregation} develops the session aggregator. Section~\ref{sec:method:complexity} discusses complexity. Notation: $\mathrm{cos}(\cdot,\cdot)$ denotes cosine similarity; $\phi: \text{strings} \to \mathbb{R}^d$ denotes the sentence-embedding map (a frozen SentenceTransformer in our implementation); $\mathbf{1}[\cdot]$ is the indicator function. We summarize frequently used symbols in Table~\ref{tab:notation}.
	
	\subsection{The Incremental Semantic Knowledge Graph}
	\label{sec:method:graph}
	
	\begin{definition}[Semantic Knowledge Graph]
		\label{def:skg}
		A Semantic Knowledge Graph at time $t$ is a directed multigraph $G_t = (V_t, E_t)$, where each node $v \in V_t$ carries attributes
		\[
		v \;=\; \big(\,\ell(v),\; \tau(v),\; \phi(v),\; \iota(v),\; t_0(v),\; q(v)\,\big),
		\]
		denoting respectively a normalized label, an entity type from a fixed taxonomy $\mathcal{T} = \{\textsc{Person}, \textsc{Event}, \textsc{Object},$ $\textsc{Concept}, \textsc{Condition}, \textsc{Organization}, \textsc{Time}, \textsc{Number}\}$, an embedding $\phi(v) \in \mathbb{R}^d$, an importance score $\iota(v) \in [0,1]$, the introduction turn $t_0(v) \in \mathbb{N}$, and a quarantine flag $q(v) \in \{0,1\}$. Each edge $e \in E_t$ carries
		\[
		e \;=\; \big(\,u, v,\; \rho(e),\; \kappa(e),\; t(e),\; \alpha(e),\; \mu(e),\; \pi(e),\; q(e)\,\big),
		\]
		denoting source/target nodes, relation string $\rho(e)$, edge kind $\kappa(e) \in \{\textsc{fact}, \textsc{semantic}\}$, turn index $t(e)$, attribute $\alpha(e) \in \mathcal{A}$, intent $\mu(e) \in \mathcal{I}$, property type $\pi(e) \in \{\textsc{Exclusive}, \textsc{Additive}\}$, and a quarantine flag $q(e)$.
	\end{definition}
	
	The attribute taxonomy $\mathcal{A}=\{\textsc{def}, \textsc{eff}, \textsc{prop}, \textsc{cmp}, \textsc{req}, \textsc{qty}, \textsc{neg}\}$ encodes \emph{which aspect} of the subject a triple describes (definition, effect, property, comparison, requirement, quantity, negation). The intent taxonomy $\mathcal{I}=\{\textsc{State}, \textsc{Advice}, \textsc{Hyp}\}$ encodes modality. These typed annotations are central to the contradiction engine: only edges with matching attributes and intents are compared.
	
	\paragraph{Triple extraction.} We treat triple extraction as a one-shot LLM call $\mathrm{Extract}: \text{turn text} \to \mathcal{T}_t$, where $\mathcal{T}_t$ is a (possibly empty) set of typed triples
	\[
	T \;=\; (s, r, o,\, \tau_s,\, \tau_o,\, \iota,\, \alpha,\, \mu,\, \pi).
	\]
	The extractor is governed by a deterministic prompt encoding (i) subject normalization rules (strip leading actions, comparative qualifiers, role-of constructs, location qualifiers, possessives), (ii) an attribute taxonomy with disambiguation rules between \textsc{def}, \textsc{eff}, \textsc{prop}, \textsc{qty}, etc., and (iii) a property-type rule that classifies whether multiple values can coexist (\textsc{Additive}) or only one is admissible (\textsc{Exclusive}). The LLM is used \emph{only} for extraction, never for reasoning or judgment, which preserves determinism of the downstream scoring pipeline modulo the extractor.
	
	\paragraph{Update rule.} Given the new triple set $\mathcal{T}_t$ at turn $t$, the graph is updated by Algorithm~\ref{alg:update}. The map $\nu(s) = \texttt{lower}(s).\texttt{strip}()$ is the canonical key. Cross-turn deduplication merges new subjects into existing graph nodes when $\mathrm{cos}(\phi(s), \phi(v)) \ge \theta_{\text{dedup}}$ for some $v \in V_{t-1}$ (with $\theta_{\text{dedup}} = 0.80$ in our implementation), enforcing label consistency across turns---a precondition for cross-turn contradiction detection. After fact edges are added, semantic edges are added between newly introduced nodes and existing nodes whenever their embedding similarity exceeds $\theta_{\text{sem}} = 0.50$.
	
	\subsection{Worked Example: A Three-Turn Walkthrough}
	\label{sec:method:walkthrough}
	We illustrate the incremental graph construction with a three-turn example. 
	Fig.~\ref{fig:skg_growth} shows how new nodes and relations are added at each turn and anchored to the existing graph via factual and semantic connections. 
	
	\paragraph{Conversation.} Three turns with reference answers:
	\begin{itemize}\setlength{\itemsep}{0pt}
		\item[\textbf{T1.}] $q_1$: \emph{``Does skipping breakfast affect metabolism?''} \quad
		$r_1$: \emph{``Yes, skipping breakfast significantly slows down metabolism.''} \quad $r^*_1$: \emph{``Skipping breakfast slows metabolism.''}
		\item[\textbf{T2.}] $q_2$: \emph{``What about concentration?''} \quad
		$r_2$: \emph{``Skipping breakfast also reduces concentration and focus.''} \quad $r^*_2$: \emph{``It reduces concentration.''}
		\item[\textbf{T3.}] $q_3$: \emph{``Does metabolic rate affect weight gain?''} \quad
		$r_3$: \emph{``Yes, a slower metabolic rate is linked to weight gain.''} \quad $r^*_3$: \emph{``Skipping breakfast slows metabolism.''}
	\end{itemize}
	
	\begin{algorithm}[t]
		\caption{\textsc{UpdateGraph}: incremental SKG update at turn $t$}
		\label{alg:update}
		\begin{algorithmic}[1]
			\Require Graph $G_{t-1}$, triple set $\mathcal{T}_t$, embedding model $\phi$, thresholds $\theta_{\text{dedup}}, \theta_{\text{sem}}$
			\State $\mathcal{T}_t \gets \textsc{Deduplicate}(\mathcal{T}_t, V_{t-1}, \phi, \theta_{\text{dedup}})$ \Comment{cross-turn label consistency}
			\State $\mathcal{N}_t \gets \emptyset$ \Comment{new nodes added at turn $t$}
			\For{each triple $T = (s,r,o,\tau_s,\tau_o,\iota,\alpha,\mu,\pi) \in \mathcal{T}_t$}
			\State $u \gets \nu(s),\; v \gets \nu(o)$
			\If{$u \notin V_{t-1}$} add $u$ with $(\ell{=}s, \tau{=}\tau_s, \phi{=}\bot, \iota, t_0{=}t)$; $\mathcal{N}_t \gets \mathcal{N}_t \cup \{u\}$ \EndIf
			\If{$v \notin V_{t-1}$} add $v$ with $(\ell{=}o, \tau{=}\tau_o, \phi{=}\bot, \iota, t_0{=}t)$; $\mathcal{N}_t \gets \mathcal{N}_t \cup \{v\}$ \EndIf
			\State Update existing-node importance: $\iota(u) \gets \max(\iota(u), \iota)$ and similarly for $v$
			\State Add fact edge $u \xrightarrow{r} v$ with $(\kappa{=}\textsc{fact}, t(e){=}t, \alpha, \mu, \pi)$
			\EndFor
			\State \textsc{LinkSemantic}$(G_t, \mathcal{N}_t, \phi, \theta_{\text{sem}})$ \Comment{add cosine-thresholded semantic edges}
			\State \Return $G_t,\, \mathcal{N}_t$
		\end{algorithmic}
	\end{algorithm}
	
	\begin{figure}[t]
		\centering
		\begin{tikzpicture}[
			node/.style={circle, draw, inner sep=0.6mm, font=\scriptsize, minimum size=8mm, fill=white},
			newnode/.style={circle, draw, inner sep=0.6mm, font=\scriptsize, minimum size=8mm, fill=orange!25, line width=0.7pt},
			fact/.style={-Stealth, thick, black},
			fact_new/.style={-Stealth, thick, orange!80!black},
			sem/.style={dashed, thick, blue!60!black},
			panel/.style={draw, thick, rounded corners=2pt, inner sep=4mm},
			lbl/.style={font=\scriptsize, midway, sloped, above},
			ttl/.style={font=\small\bfseries}
			]
			
			\begin{scope}[local bounding box=A]
				\node[newnode] (sb1) at (0,0) {sb};
				\node[newnode] (mt1) at (1.6,0) {met};
				\draw[fact_new] (sb1) -- node[lbl] {slows} (mt1);
			\end{scope}
			\node[ttl, above=4mm of A.north] {Turn 1};
			\node[font=\scriptsize, below=5mm of A.south, text width=32mm, align=center]
			{\emph{`skipping breakfast slows metabolism''}};
			\node[panel, fit=(A)] {};
			
			\begin{scope}[shift={(4.4,0)}, local bounding box=B]
				\node[node] (sb2) at (0,0) {sb};
				\node[node] (mt2) at (1.6,0) {met};
				\node[newnode] (cn2) at (0.8,-1.4) {conc};
				\draw[fact] (sb2) -- node[lbl] {slows} (mt2);
				\draw[fact_new] (sb2) -- node[lbl, sloped, below] {reduces} (cn2);
			\end{scope}
			\node[ttl, above=4mm of B.north] {Turn 2};
			\node[font=\scriptsize, below=5mm of B.south, text width=32mm, align=center]
			{\emph{`$\ldots$ also reduces concentration''}};
			\node[panel, fit=(B)] {};
			
			\begin{scope}[shift={(8.6,0)}, local bounding box=C]
				\node[node] (sb3) at (0,0) {sb};
				\node[node] (mt3) at (1.6,0) {met};
				\node[node] (cn3) at (0.8,-1.4) {conc};
				\node[newnode] (mr3) at (3.2,0.0) {m.\ rate};
				\node[newnode] (wg3) at (3.6,-2.2){weight gain};
				\draw[fact] (sb3) -- node[lbl] {slows} (mt3);
				\draw[fact] (sb3) -- node[lbl, sloped, below] {reduces} (cn3);
				\draw[fact_new] (mr3) -- node[lbl, sloped, below] {linked} (wg3);
				\draw[sem] (mt3) to[bend left=20] node[font=\scriptsize, midway, above] {sem (0.88)} (mr3);
			\end{scope}
			\node[ttl, above=4mm of C.north] {Turn 3};
			\node[font=\scriptsize, below=5mm of C.south, text width=42mm, align=center]
			{\emph{`slower metabolic rate $\to$ weight gain''}};
			\node[panel, fit=(C)] {};
			
		\end{tikzpicture}
		
		\caption{Turn-wise growth of the incremental Semantic Knowledge Graph. Orange nodes and edges denote information introduced at the current turn, black edges denote prior factual commitments, and dashed blue edges denote semantic links induced by embedding similarity. Turn~3 illustrates semantic anchoring of \emph{metabolic rate} to the prior node \emph{metabolism}, followed by factual expansion toward \emph{weight gain}. Abbreviations: \emph{sb}=skipping breakfast, \emph{met}=metabolism, \emph{conc}=concentration.}
		\label{fig:skg_growth}
	\end{figure}
	
	\subsection{Local Relevance via the Semantic Triangle}
	\label{sec:method:slocal}
	
	The local relevance score $S^{\text{loc}}_t$ measures whether $r_t$ addresses $q_t$ and---when a reference $r^*_t$ is available---whether it covers what a correct answer should contain. We compute it as a weighted maximum over sentence-level cosine similarities, with a reference-availability gate that protects against degenerate inputs.
	
	\begin{definition}[Semantic Triangle]
		\label{def:triangle}
		Let $\mathcal{S}(r_t) = \{s_1,\ldots,s_K\}$ be the sentence segmentation of $r_t$, and let $\Phi(\mathcal{S}(r_t)) = [\phi(s_1);\ldots;\phi(s_K)] \in \mathbb{R}^{K \times d}$. Define
		\begin{align}
			m_q &\;=\; \max_{k \le K} \mathrm{cos}\!\big(\phi(q_t),\, \phi(s_k)\big), \\
			m_r &\;=\; \max_{k \le K} \mathrm{cos}\!\big(\phi(r^*_t),\, \phi(s_k)\big) \quad \text{(when } r^*_t \neq \varnothing\text{)}.
		\end{align}
		With reference-availability indicator $\beta_t = \mathbf{1}\!\big[\,r^*_t \neq \varnothing\,\wedge\, |q_t| \ge L_{\min}\,\big]$, the Semantic Triangle local-relevance score is
		\begin{equation}
			S^{\text{loc}}_t \;=\; \begin{cases} \omega_q\, m_q + \omega_r\, m_r & \text{if } \beta_t = 1, \\ m_q & \text{if } \beta_t = 0,\end{cases}
			\quad \text{with } \omega_q + \omega_r = 1.
			\label{eq:triangle}
		\end{equation}
		We use $\omega_q = 0.4,\, \omega_r = 0.6$ and $L_{\min} = 10$ words.
	\end{definition}
	
	\paragraph{Why max-pooling.} The maximum over sentences (rather than the mean) implements a coverage notion: $r_t$ addresses $q_t$ if \emph{some} sentence in $r_t$ is highly aligned with $q_t$. This rewards extended responses that contain a focused answer surrounded by elaboration without diluting the score by averaging in low-relevance sentences.
	
	\paragraph{Reference-availability gate.} The classifier $\beta_t$ guards against two failure modes of naive triangulation. First, when no reference is provided, $m_r$ is undefined; collapsing to $m_q$ keeps the score honest. Second, when the prompt itself is degenerate (e.g.\ a one-word topic label, $|q_t| < L_{\min}$), $m_q$ is dense in the response by construction and the triangle would over-credit reference match alone; collapsing to $m_q$ prevents this.
	
	\paragraph{Short-response context glue.} For \emph{short} responses (word count $< W_{\text{short}}$), pronouns and elliptical references can deflate $m_q$. We apply a context glue: encode the augmented string \texttt{``Prompt: $q_t$ Response: $r_t$''} in lieu of $r_t$ itself for the $S^{\text{loc}}_t$ computation. This is a linguistic anaphora-resolution heuristic, not a semantic change to Definition~\ref{def:triangle}.
	
	\subsection{Historical Consistency via Graph Connectivity and Session Anchor}
	\label{sec:method:scons}
	
	Historical consistency $S^{\text{cons}}_t$ measures whether the new turn's content attaches to the existing conversation. We give two complementary measurements: a structural \emph{graph anchor} score over the typed-edge connectivity of the new nodes, and a sequence-level \emph{session anchor} score that rescues focused-Q\&A patterns where graph disconnection is structurally expected.
	
	\begin{definition}[Graph anchor score]
		\label{def:graph_anchor}
		Let $\mathcal{N}_t$ be the new nodes added at turn $t$, and let $V^{<t} = V_{t-1}$. For each $u \in \mathcal{N}_t$, define the per-node anchor score
		\begin{equation}
			a(u) \;=\; \begin{cases} \eta_{\text{F}} & \text{if } \exists\, v \in V^{<t}\,\text{ with edge } u \to v \text{ or } v \to u \text{ of kind } \textsc{fact}, \\
				\eta_{\text{S}} & \text{else if } \exists\, v \in V^{<t}\,\text{ connected to } u \text{ by a } \textsc{semantic}\text{ edge}, \\
				\eta_{\text{D}} & \text{otherwise (drift)}, \end{cases}
			\label{eq:anchor_perNode}
		\end{equation}
		with $\eta_{\text{F}} = 1.0,\, \eta_{\text{S}} = 0.65,\, \eta_{\text{D}} = 0.20$. The graph anchor score is the importance-weighted mean
		\begin{equation}
			S^{\text{graph}}_t \;=\; \frac{\sum_{u \in \mathcal{N}_t} \iota(u)\, a(u)}{\sum_{u \in \mathcal{N}_t} \iota(u)},
			\label{eq:graph_anchor}
		\end{equation}
		with the convention $S^{\text{graph}}_t = 1$ when $\mathcal{N}_t = \emptyset$ or $V^{<t} = \emptyset$.
	\end{definition}
	
	\begin{remark}[Why factual edges dominate semantic ones]
		A factual edge in $G_t$ is an explicit assertion bound to a specific historical turn---it is the strongest possible attachment of the new node to the dialogue's commitment store. A semantic edge merely indicates embedding proximity to a prior label; this is suggestive but rebuttable. The hierarchy $\eta_{\text{F}} > \eta_{\text{S}} > \eta_{\text{D}}$ encodes this asymmetry. Importance weighting in~\eqref{eq:graph_anchor} prevents incidental low-importance fillers (e.g.\ entity mentions in passing) from dominating the score.
	\end{remark}
	
	\paragraph{Worked example.} Suppose a turn introduces two new nodes, \textit{metabolic rate} and \textit{weight gain}, with importances $0.85$ and $0.60$ respectively. The node \textit{metabolic rate} is connected to the existing node \textit{metabolism} only via a semantic edge (cosine similarity $0.88$ between their label embeddings, above the $\theta_{\text{sem}}{=}0.50$ threshold), so $a(\textit{metabolic rate})=\eta_{\text{S}}=0.65$. The node \textit{weight gain} has no connection to any prior node, so $a(\textit{weight gain})=\eta_{\text{D}}=0.20$. The graph anchor score is then
	\[
	S^{\text{graph}}_t = \frac{0.85 \cdot 0.65 + 0.60 \cdot 0.20}{0.85 + 0.60} = \frac{0.553 + 0.120}{1.45} \approx 0.464,
	\]
	which is the importance-weighted compromise between a moderately-anchored important concept and an isolated minor one. Without importance weighting, both nodes would contribute equally and the score would be the unweighted mean $0.425$, slightly under-crediting the anchored part of the turn.
	
	\paragraph{Session anchor rescue.} Pure graph attachment under-credits Q\&A sessions in which each turn introduces a new sub-topic but all turns share an overarching theme: in such sessions, $\mathcal{N}_t$ is graph-disconnected by design. We rescue the score with a sequence-level anchor.
	
	\begin{definition}[Session anchor]
		Let $\phi^{(1)}$ denote the embedding of the (glued) first turn, fixed at the start of the session. Define the session-anchor similarity
		\begin{equation}
			S^{\text{anc}}_t \;=\; \delta \cdot \mathrm{cos}\!\big(\phi(r_t),\, \phi^{(1)}\big), \qquad \delta \in (0,1].
			\label{eq:session_anchor}
		\end{equation}
		The full historical consistency score is
		\begin{equation}
			S^{\text{cons}}_t \;=\; \max\!\big(\, S^{\text{graph}}_t,\, S^{\text{anc}}_t \,\big).
			\label{eq:s_cons}
		\end{equation}
		We use $\delta = 0.85$, reflecting that the anchor is weaker evidence than a direct graph edge and should not dominate it when the latter is high.
	\end{definition}
	
	\begin{proposition}[Boundedness and monotonicity of $S^{\text{cons}}_t$]
		\label{prop:bound}
		For any $t \ge 1$ and any non-empty $\mathcal{N}_t$, $S^{\text{cons}}_t \in [\eta_{\text{D}},\,1]$. Moreover, if a new turn introduces at least one node $u \in \mathcal{N}_t$ that is fact-connected to $V^{<t}$ and $\sum_{u' \in \mathcal{N}_t} \iota(u') > 0$, then $S^{\text{graph}}_t \ge \eta_{\text{F}} \cdot \min_{u' \in \mathcal{N}_t} \iota(u') / \sum_{u'} \iota(u') > 0$.
	\end{proposition}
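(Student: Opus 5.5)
The plan is to treat $S^{\text{graph}}_t$ as a convex combination and read off both claims from the definitions in~\eqref{eq:anchor_perNode}, \eqref{eq:graph_anchor}, \eqref{eq:session_anchor} and~\eqref{eq:s_cons}. Throughout I will use that importances satisfy $\iota(u)\in[0,1]$ (Definition~\ref{def:skg}). For the first claim I also need $\sum_{u\in\mathcal{N}_t}\iota(u)>0$, since otherwise~\eqref{eq:graph_anchor} is undefined; I would state this as the standing assumption under which the score is defined. I would first dispose of the degenerate case $V^{<t}=\emptyset$ (e.g.\ $t=1$). There the convention gives $S^{\text{graph}}_t=1$, so $S^{\text{cons}}_t\ge 1$, and it remains only to check the upper bound.

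For the lower bound in the non-degenerate case, I write $S^{\text{graph}}_t=\sum_{u}\lambda_u a(u)$ with $\lambda_u=\iota(u)/\sum_{u'}\iota(u')$. The $\lambda_u$ are nonnegative and sum to one. Since every $a(u)\in\{\eta_{\text{F}},\eta_{\text{S}}\,\eta_{\text{D}}\}$ and $\eta_{\text{D}}\le\eta_{\text{S}}\le\eta_{\text{F}}=1$, the convex combination lies in $[\eta_{\text{D}},1]$. Then $S^{\text{cons}}_t=\max(S^{\text{graph}}_t,S^{\text{anc}}_t)\ge S^{\text{graph}}_t\ge\eta_{\text{D}}$. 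The only real subtlety is the upper bound on the max. Cosine similarity is at most $1$ and $\delta\le 1$, so $S^{\text{anc}}_t\le\delta\le 1$. Note that $S^{\text{anc}}_t$ may be negative (cosine can be), but this is harmless because it only enters through a max with a quantity already $\ge\eta_{\text{D}}$. Hence $S^{\text{cons}}_t\le 1$.

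For the second claim, let $u_0\in\mathcal{N}_t$ be fact-connected to $V^{<t}$, so $a(u_0)=\eta_{\text{F}}$. Every term $\iota(u)a(u)$ in the numerator of~\eqref{eq:graph_anchor} is nonnegative, so I drop all terms except $u_0$. This gives
\[
S^{\text{graph}}_t\;\ge\;\frac{\eta_{\text{F}}\,\iota(u_0)}{\sum_{u'}\iota(u')}\;\ge\;\frac{\eta_{\text{F}}\min_{u'\in\mathcal{N}_t}\iota(u')}{\sum_{u'}\iota(u')}.
\]
Strict positivity is the step I expect to need care. If some node has $\iota=0$, the displayed ratio itself can vanish. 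I would therefore obtain positivity not from that ratio but from the first part: $S^{\text{graph}}_t\ge\eta_{\text{D}}>0$ whenever the denominator is positive. When $\min\iota>0$, positivity also follows directly from the ratio. I would remark that the stated bound is weak, since the convex-combination view already gives the sharper bound $S^{\text{graph}}_t\ge\eta_{\text{D}}+(\eta_{\text{F}}-\eta_{\text{D}})\lambda_{u_0}$.
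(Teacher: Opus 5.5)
Your proof is correct and is the ``direct algebra on~\eqref{eq:graph_anchor}'' that the paper states and omits: you write $S^{\text{graph}}_t$ as a convex combination of values in $[\eta_{\text{D}},1]$, and you use $S^{\text{anc}}_t\le\delta\le1$ to control the max from above. Your two caveats are also genuine defects in the literal statement, and your repairs are sound. The first part needs $\sum_{u}\iota(u)>0$ whenever $V^{<t}\neq\emptyset$, since otherwise the score is undefined. The strict positivity of the displayed ratio fails when some $\iota(u')=0$, so positivity of $S^{\text{graph}}_t$ is correctly taken from the bound $\eta_{\text{D}}>0$ instead.
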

	
	The proof is by direct algebra on~\eqref{eq:graph_anchor}; we omit it here.
	
	\subsection{Logical Coherence via the Geometric Contradiction Engine}
	\label{sec:method:slogic}
	
	The logical coherence score $S^{\text{log}}_t$ is the primary component of the framework. 
	The process does not make use of string-level NLI or prompt-based LLM judgment, but instead applies a deterministic geometric reasoner to identify contradictions using an ever-evolving Semantic Knowledge Graph (SKG).
	
	\paragraph{Contradiction as stateful semantic incompatibility.}
	Unlike NLI models applied at the sentence level where contradictions are evaluated on pairs of premises and hypotheses, the SKG-Eval approach views contradiction identification as a state-based incompatibility task involving a semantic memory which is continuously updated during the course of the conversation. This involves building a Semantic Knowledge Graph \cite{Mokg2025} which reflects the set of factual claims made throughout the conversation.
	
	We denote the conversational semantic state at turn $t$ as $\Sigma_t := G_t$,
	where $G_t$ stores all factual and semantic commitments accumulated up to turn $t$.
	
	The contradiction engine uses symbolically derived contradictions combined with geometric compatibility using embeddings. The symbolic approach helps detect high precision logical contradictions, such as negations, antonyms, and numerical contradictions, while geometric similarity provides resistance to paraphrasing and surface variation.

	The contradiction engine checks the current claims for their logical consistency against the past claims tied together through relations and objects similarities to the same semantic anchor, then applies contradiction cascades.
	
	\paragraph{Time-partitioned edge sets.} For each candidate node $u$, define
	\begin{align}
		\mathcal{E}^{\text{cur}}(u) &= \{\,e \in E_t : \kappa(e) = \textsc{fact},\, q(e) = 0,\, t(e) = t,\, e \text{ incident to } u \,\}, \\
		\mathcal{E}^{\text{hist}}(u) &= \{\,e \in E_t : \kappa(e) = \textsc{fact},\, q(e) = 0,\, t(e) < t,\, e \text{ incident to } u \,\}.
	\end{align}
	The geometric engine compares each pair $(e_c, e_h) \in \mathcal{E}^{\text{cur}}(u) \times \widetilde{\mathcal{E}}^{\text{hist}}(u)$ for the candidate set
	\[
	\mathcal{C}_t = \mathcal{N}_t \;\cup\; \big\{\,u \in V_{t-1} : \mathcal{E}^{\text{cur}}(u) \neq \emptyset \,\big\} \;\setminus\; \mathcal{B},
	\]
	where $\mathcal{B}$ is a blocklist of generic pronoun/filler nodes (\texttt{i}, \texttt{you}, \texttt{this}, \texttt{thing}, etc.) that carry no contradiction-bearing semantics.
	
    \paragraph{Revision-aware history filtering.}
    In multi-turn dialogue, users may intentionally revise previously established information (e.g., ``change that to...'', ``instead use...''). Such operations correspond to authorized conversational state updates rather than logical inconsistencies.
    
    We therefore introduce a revision-aware filtering mechanism. Let $\mathcal{R}_t \subseteq E_t$ denote the set of historical edges identified as revision targets at turn $t$. For contradiction evaluation, the historical comparison set is redefined as:
    \begin{equation}
    	\widetilde{\mathcal{E}}^{\text{hist}}(u)
    	\;=\;
    	\mathcal{E}^{\text{hist}}(u) \setminus \mathcal{R}_t.
    \end{equation}
    
    All subsequent contradiction comparisons are then performed on
    \[
    \mathcal{E}^{\text{cur}}(u)
    \times
    \widetilde{\mathcal{E}}^{\text{hist}}(u).
    \]
    
    Importantly, this filtering is \emph{ephemeral}: the underlying graph is not permanently modified, and the suppressed edges remain available for future conversational context.
    
    \paragraph{Intuition through example.}
    Consider a dialogue in which the user intentionally revises a previously established slogan.
    
    \textbf{Turn 1 (initial state).}
    \begin{itemize}
    	\item User: ``I am launching a new coffee brand. The slogan is `Morning Spark: Fueling your day'. What kind of audience does that attract?''
    	
    	\item Assistant: ``That slogan attracts young professionals seeking an energetic start to the day.''
    \end{itemize}
    
    The extracted semantic memory contains:
    \[
    [\texttt{Brand}]
    \rightarrow
    [\texttt{has slogan}]
    \rightarrow
    [\texttt{Morning Spark: Fueling your day}]
    \]
    
    \textbf{Turn 2 (intentional revision).}
    \begin{itemize}
    	\item User: ``Please change the slogan to `Morning Spark: Relax and unwind'. That works better for my decaf line. Who is the audience now?''
    	
    	\item Assistant: ``The revised slogan targets users seeking a calming and soothing morning routine.''
    \end{itemize}
    
    Without revision-aware filtering, the contradiction engine would compare:
    \[
    [\texttt{has slogan}]
    \rightarrow
    [\texttt{Fueling your day}]
    \]
    against
    \[
    [\texttt{has slogan}]
    \rightarrow
    [\texttt{Relax and unwind}]
    \]
    under the same semantic relation. Since the object meanings are directionally opposed, the detector cascade could incorrectly trigger a semantic drift or exclusivity conflict.
    
    However, the phrase:
    \[
    \texttt{``Please change the slogan to...''}
    \]
    acts as an explicit revision signal. Consequently, the earlier slogan edge is temporarily added to $\mathcal{R}_t$ and removed from the contradiction comparison set:
    \[
    \widetilde{\mathcal{E}}^{\text{hist}}(u)
    =
    \mathcal{E}^{\text{hist}}(u)
    \setminus
    \mathcal{R}_t.
    \]
    
    As a result, the new slogan is interpreted as a legitimate conversational update rather than a contradiction. This prevents the evaluator from penalizing dialogue systems for correctly following user-authorized revisions.
    
    The contradiction engine therefore evaluates inconsistencies through progressively weaker regimes:
    (i) explicit polarity contradictions,
    (ii) directional semantic contradictions,
    (iii) structural exclusivity conflicts, and
    (iv) residual semantic divergence.
    This hierarchy ensures that high-confidence symbolic conflicts are resolved before softer embedding-geometric inconsistencies are considered.
	\paragraph{The detector hierarchy.}
	For each pair $(e_c,e_h)$ on a candidate node $u$, the engine computes relation similarity
	\[
	\mathrm{rs}=\mathrm{cos}(\phi(\rho(e_c)),\phi(\rho(e_h)))
	\]
	and object similarity
	\[
	\mathrm{os}=\mathrm{cos}(\phi(\ell(o_c)),\phi(\ell(o_h))),
	\]
	where $o_c$ and $o_h$ are the object endpoints. The pair is then processed by a prioritized cascade of typed contradiction detectors summarized in Table~\ref{tab:detectors}.
	
	The ordering of the cascade reflects contradiction reliability. High-precision symbolic conflicts, such as explicit negation and antonymic reversal, are evaluated first. Abstaining guards (\textsc{IntentGate}, \textsc{ElabGuard}, \textsc{NoiseFloor}, and \textsc{ExclusivityGuard}) suppress categorically incomparable edge pairs and therefore prevent specific classes of false positives. Finally, numeric mismatch, exclusive-object conflict (EOC), and semantic drift are evaluated as progressively softer forms of incompatibility.
	
	The cascade exits at the first detector that fires, ensuring that strong symbolic contradictions dominate weaker embedding-geometric inconsistencies. The field $\rho_{\text{type}}$ denotes the relation-type label assigned by the extractor (cf. Section~\ref{sec:method:graph}); $\mathcal{A}_{\text{ant}}$ is a curated antonym dictionary (\textit{increases}/\textit{decreases}, \textit{causes}/\textit{prevents}, \textit{always}/\textit{never}, etc.); and $\mathcal{M}_{\neg}$ is a fixed set of negation markers. Same-type groups (\textsc{Person}, \textsc{Number}, $\ldots$) are defined according to the entity taxonomy in Definition~\ref{def:skg}.
	
	\paragraph{Revision-aware guard.}
	The revision filtering described above is applied prior to the detector cascade and acts as a higher-priority guard. Consequently, even when a pair $(e_c,e_h)$ satisfies all geometric and type constraints, it is excluded from contradiction evaluation if $e_h \in \mathcal{R}_t$. This ensures that the engine distinguishes between model inconsistency and user-directed state updates.
	
	\paragraph{Unified contradiction operator.}
	We define contradiction confidence as:
	\begin{equation}
		c(e_c,e_h) = \max_{k \in \mathcal{D}} c_k(e_c,e_h),
	\end{equation}
	where $\mathcal{D}$ denotes the set of geometric detectors.
	
\begin{table}[t]
	\caption{
		Geometric contradiction detector cascade.
		Detectors are evaluated top-to-bottom, and the first firing detector assigns the contradiction confidence used in $S^{\text{log}}_t$.
		Italicized rows denote abstaining guards that suppress categorically incomparable comparisons.
		Implementation thresholds are reported in Appendix~\ref{app:parameters}.
		Revision-filtered edges (cf.~$\widetilde{\mathcal{E}}^{\text{hist}}(u)$) are excluded prior to contradiction evaluation.
	}
	\label{tab:detectors}
	\centering
	\small
	\begin{tabular}{@{}p{2.9cm}p{6.2cm}p{2.6cm}@{}}
		\toprule
		\textbf{Detector} & \textbf{Firing condition} & \textbf{Confidence $c$} \\
		\midrule
		
		\textsc{NegFlip}
		&
		exactly one of $\rho(e_c),\rho(e_h)\in\mathcal{M}_{\neg}$;
		$\mathrm{os}>\theta^{\text{neg}}_{\text{obj}}$;
		both $\rho_{\text{type}}\notin\{\textsc{elab},\textsc{sol},\textsc{diag}\}$
		&
		$0.95$
		\\
		
		\textsc{Antonym}
		&
		$(\rho(e_c),\rho(e_h))$ opposite in $\mathcal{A}_{\text{ant}}$,
		$\mathrm{os}>\theta^{\text{obj}}_{\min}$
		&
		$0.88$
		\\
		
		\midrule
		
		\textit{\textsc{IntentGate}}
		&
		\textit{$\mu(e_c)\neq\mu(e_h)$}
		&
		\textit{abstain}
		\\
		
		\textit{\textsc{ElabGuard}}
		&
		\textit{$\rho_{\text{type}}(e_c)\in\{\textsc{elab},\textsc{sol},\textsc{diag}\}$}
		&
		\textit{abstain}
		\\
		
		\textit{\textsc{NoiseFloor}}
		&
		\textit{$\mathrm{rs}<\theta^{\text{rel}}_{\min}$ or $\mathrm{os}<\theta^{\text{obj}}_{\min}$}
		&
		\textit{abstain}
		\\
		
		\midrule
		
		\textsc{NumMismatch}
		&
		$\mathrm{rs}>0.70$ and extracted numerals satisfy $n_c\neq n_h$
		&
		$0.92$
		\\
		
		\textsc{EOC}
		&
		$\mathrm{rs}>0.85$,
		$\mathrm{os}<\theta^{\text{obj}}_{\text{div}}$,
		both predicates satisfy $\pi=\textsc{Excl}$
		&
		$1-\mathrm{os}$
		\\
		
		\textsc{Same-Type EOC}
		&
		$\mathrm{rs}>0.85$,
		same-type entity group,
		$\mathrm{os}<\theta^{\text{ST}}$,
		both predicates satisfy $\pi=\textsc{Excl}$
		&
		$\max(0.60,\,1-\mathrm{os})$
		\\
		
		\textsc{Residual Semantic Drift}
		&
		$\mathrm{rs}>0.60$,
		$\theta^{\text{obj}}_{\min}<\mathrm{os}<0.75$
		&
		$\begin{cases}
			1-\mathrm{os}, & \mathrm{os}<0.40\\
			0.45, & 0.40\le \mathrm{os}<0.75
		\end{cases}$
		\\
		
		\bottomrule
	\end{tabular}
\end{table}
	
	\begin{figure}[t]
		\centering
		\begin{tikzpicture}[
			node distance=3.2mm and 6mm,
			start/.style={draw, rounded corners=4pt, fill=blue!8, font=\scriptsize, minimum width=32mm, align=center, inner sep=1.5mm},
			det/.style={draw, fill=orange!10, font=\scriptsize, minimum width=32mm, align=center, inner sep=1.2mm},
			guard/.style={draw, fill=gray!12, font=\scriptsize\itshape, minimum width=32mm, align=center, inner sep=1.2mm},
			finalbox/.style={draw, rounded corners=4pt, fill=red!12, font=\scriptsize, minimum width=26mm, align=center, inner sep=1.2mm},
			absent/.style={draw, dashed, fill=gray!5, font=\scriptsize\itshape, minimum width=26mm, align=center, inner sep=1.2mm},
			yes/.style={-Stealth, thick, red!70!black, font=\scriptsize},
			no/.style={-Stealth, thick, black!60, font=\scriptsize},
			done/.style={-Stealth, thick, black, font=\scriptsize}
			]
			
			\node[start] (s) {
				pair $(e_c, e_h)$ on candidate $u$\\
				$(e_c,e_h)\in \mathcal{E}^{\text{cur}}(u)\times \widetilde{\mathcal{E}}^{\text{hist}}(u)$\\
				compute $\mathrm{rs}, \mathrm{os}$
			};
			
			\node[guard, below=of s] (rf) {\textsc{RevisionFilter}\\$e_h \notin \mathcal{R}_t$};
			
			\node[det, below=of rf] (d1) {\textsc{NegFlip}\\$\mathrm{os}>0.40$};
			\node[det, below=of d1] (d2) {\textsc{Antonym}\\$\mathrm{os}>\theta^{\text{obj}}_{\min}$};
			
			\node[guard, below=of d2] (g1) {\textsc{IntentGate} / \textsc{ElabGuard} /\\ \textsc{NoiseFloor}};
			
			\node[det, below=of g1] (d3) {\textsc{NumMismatch}\\$\mathrm{rs}>0.70,\;n_c{\neq}n_h$};
			\node[det, below=of d3] (d4) {\textsc{EOC} (Std/Same-type)\\$\mathrm{rs}>0.85,\;\pi=\textsc{Excl}$};
			\node[det, below=of d4] (d5) {\textsc{SemDrift} (Strong/Mod)\\$\mathrm{rs}>0.60$};
			
			\node[absent, below=of d5] (none) {no fire $\Rightarrow c=0$};
			
			\node[finalbox, right=24mm of d1] (o1) {$c=0.95$};
			\node[finalbox, right=24mm of d2] (o2) {$c=0.88$};
			\node[absent, right=24mm of g1] (og) {abstain ($c=0$)};
			\node[finalbox, right=24mm of d3] (o3) {$c=0.92$};
			\node[finalbox, right=24mm of d4] (o4) {$1{-}\mathrm{os}$ or $\max(0.6,1{-}\mathrm{os})$};
			\node[finalbox, right=24mm of d5] (o5) {$1{-}\mathrm{os}$ or $0.45$};
			
			\draw[done] (s) -- (rf);
			\draw[no] (rf) -- node[right] {pass} (d1);
			
			\foreach \a/\b in {d1/d2, d2/g1, g1/d3, d3/d4, d4/d5, d5/none} {
				\draw[no] (\a) -- node[right] {no} (\b);
			}
			
			\draw[yes] (d1.east) -- node[above] {yes} (o1.west);
			\draw[yes] (d2.east) -- node[above] {yes} (o2.west);
			\draw[yes] (g1.east) -- node[above] {fires} (og.west);
			\draw[yes] (d3.east) -- node[above] {yes} (o3.west);
			\draw[yes] (d4.east) -- node[above] {yes} (o4.west);
			\draw[yes] (d5.east) -- node[above] {yes} (o5.west);
			
		\end{tikzpicture}
		
		\caption{
			Geometric contradiction detector cascade with revision-aware filtering. 
			Each pair $(e_c, e_h)$ is drawn from the comparison set 
			$\mathcal{E}^{\text{cur}}(u)\times \widetilde{\mathcal{E}}^{\text{hist}}(u)$, 
			where $\widetilde{\mathcal{E}}^{\text{hist}}(u)$ excludes revision-targeted edges 
			$\mathcal{R}_t$. The \textsc{RevisionFilter} acts as a pre-cascade guard, removing 
			user-intended updates before contradiction evaluation. The remaining pairs are 
			processed top-to-bottom; the first detector that fires assigns confidence $c$. 
			Italic gray nodes denote abstaining guards that suppress incomparable pairs. 
			The final score is 
			$S^{\text{log}}_t = 1 - \max_{u \in \mathcal{C}_t} \max_{(e_c,e_h)} c$.
		}
		
		\label{fig:cascade}
	\end{figure}
	
	\paragraph{Score aggregation.} For each candidate node $u$, let $c^*(u) = \max_{(e_c, e_h)} c(e_c, e_h)$ be the maximum confidence over all detector firings ($0$ if none). The logical coherence score is
	\begin{equation}
		S^{\text{log}}_t \;=\; 1 \;-\; \max_{u \in \mathcal{C}_t}\, c^*(u),
		\label{eq:slogic}
	\end{equation}
	with $S^{\text{log}}_t = 1$ when no detector fires.
	
	\paragraph{Why geometric rather than NLI.}
	String-level NLI cross-encoders process linearized premise--hypothesis pairs and exhibit three failure modes that occur frequently in long dialogue: 
	(i) numeric substitution within otherwise identical claims, 
	(ii) antonymic or paraphrased contradiction across surface-different statements, and 
	(iii) contradictions whose conflicting evidence is buried in long conversational prefixes. 
	
	SKG-Eval addresses these limitations by operating directly on structured semantic state rather than serialized text. Contradiction detection is performed over typed graph edges with explicit symbolic reasoning, embedding-geometric compatibility analysis, and revision-aware filtering that suppresses user-intended updates prior to contradiction evaluation.
	
	\begin{proposition}[Conditions favoring geometric contradiction reasoning]
		\label{prop:dominance}
		Let $f_{\textsc{NLI}}$ denote a string-level NLI cross-encoder with bounded effective context $L_{\max}$, and let $f_{\textsc{GEO}}$ denote the proposed geometric contradiction engine. The geometric engine is expected to exhibit higher contradiction-recall than $f_{\textsc{NLI}}$ under the following conditions:
		\begin{itemize}
			\item[(i)] \textbf{Numeric substitution.} 
			Contradictory claims differ primarily in symbolic values embedded within otherwise similar contexts (e.g.\ ``boils at 100$^\circ$C'' vs.\ ``boils at 90$^\circ$C'').
			
			\item[(ii)] \textbf{Long-prefix contradiction.} 
			The contradictory historical claim lies outside the effective context window of the NLI encoder.
			
			\item[(iii)] \textbf{Antonymic paraphrase.} 
			The contradiction is expressed through semantically opposing relations under paraphrased surface forms (e.g.\ ``increases'' vs.\ ``reduces'').
		\end{itemize}
	\end{proposition}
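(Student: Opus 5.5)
The statement is qualitative ("is expected to exhibit higher recall"), so we make it precise before proving anything. For each condition (i)--(iii) we will exhibit a class of contradictory turn pairs. On that class we will show that $f_{\textsc{GEO}}$ fires a detector with confidence bounded below, deterministically. For $f_{\textsc{NLI}}$ we will identify a structural reason its recall can degrade: either an information-theoretic impossibility, or a dependence on learned sensitivity that the engine does not need. Recall is then compared class by class. We will work conditionally on correct extraction: the extractor returns the relevant triples with matching attribute and intent, and Algorithm~\ref{alg:update} with $\theta_{\text{dedup}}$ merges the subjects onto a common node $u$. Under this assumption $u \in \mathcal{C}_t$, the historical edge $e_h$ lies in $\mathcal{E}^{\text{hist}}(u)$, and, absent a revision signal, $e_h \notin \mathcal{R}_t$.

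\textbf{Case (ii), long prefix.} This is the cleanest case, and we will prove it first.
\begin{itemize}
\item For the NLI model, if the historical claim lies at token distance greater than $L_{\max}$ from $r_t$, then it is truncated from every premise-hypothesis input. The output of $f_{\textsc{NLI}}$ is therefore independent of that claim, and its recall on this class is at most its false-positive rate on non-contradictory inputs.
\item For the engine, $\mathcal{E}^{\text{hist}}(u)$ contains all fact edges with $t(e)<t$, regardless of their distance. The pair $(e_c,e_h)$ is always compared, so recall equals the detector's recall on the pair in isolation, which is independent of prefix length.
\end{itemize}

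\textbf{Case (i), numeric substitution.} Here we will trace the cascade of Table~\ref{tab:detectors}.
\begin{itemize}
\item The relations coincide, so $\mathrm{rs}\approx 1>0.70$.
\item The objects (e.g.\ ``100$^\circ$C'' vs.\ ``90$^\circ$C'') have high embedding similarity. They therefore clear \textsc{NoiseFloor} and do not trigger \textsc{NegFlip} or \textsc{Antonym}.
\item Intent and relation type match by assumption, so \textsc{IntentGate} and \textsc{ElabGuard} do not abstain.
\item \textsc{NumMismatch} then fires exactly when $n_c\neq n_h$, giving $c=0.92$ and hence $S^{\text{log}}_t\le 0.08$ by~\eqref{eq:slogic}.
\end{itemize}
The contrast with NLI is that its decision is a smooth function of token embeddings in which the numerals are a small perturbation of an otherwise near-identical input. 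We will argue that its recall depends on learned numeric sensitivity, whereas the engine's test is exact symbol inequality.

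\textbf{Case (iii), antonymic paraphrase.}
\begin{itemize}
\item When $(\rho(e_c),\rho(e_h))$ is listed in $\mathcal{A}_{\text{ant}}$ and the objects agree ($\mathrm{os}>\theta^{\text{obj}}_{\min}$), the \textsc{Antonym} detector fires with $c=0.88$. It sits before every guard, so it cannot be suppressed.
\item Paraphrase of the subject or object is absorbed by deduplication and by the cosine threshold on $\mathrm{os}$.
\end{itemize}
Paraphrase of the relation itself ("reduces" for "decreases") is the main obstacle. Exact dictionary lookup fails here, so recall hinges on coverage of $\mathcal{A}_{\text{ant}}$ or on normalization of relation strings. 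We will therefore state (iii) conditionally, for relation pairs within the dictionary's closure under the extractor's normalization. We will note that outside this set the fallback is only \textsc{Residual Semantic Drift}, which yields $c=0.45$ when $0.40\le\mathrm{os}<0.75$ and $1-\mathrm{os}$ below that, and which does not fire at all for matching objects with $\mathrm{os}\ge 0.75$.

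The result will accordingly be framed as a conditional recall guarantee rather than an unconditional dominance claim. The empirical comparison in Section~\ref{sec:experiments} will serve to check that the conditions (correct extraction, dictionary coverage) hold often in practice.
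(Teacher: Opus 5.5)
\paragraph{Comparison with the paper's sketch.} Your proposal uses the same three-case decomposition and the same mechanisms as the paper's proof sketch: direct comparison of extracted numerals for (i), length-independent retrieval of historical edges from the graph for (ii), and explicit relation-level opposition for (iii). The paper's sketch is three informal sentences that tacitly assume correct extraction and full coverage of $\mathcal{A}_{\text{ant}}$. You make those assumptions explicit and trace the cascade, and that buys three things.
\begin{enumerate}
\item In (i) and (iii) the engine fires on every instance of the conditional class, so its recall there is $1$ and weak dominance over $f_{\textsc{NLI}}$ is automatic. The ``learned numeric sensitivity'' discussion is needed only for strictness, which is exactly what ``is expected to'' leaves to experiment.
\item Your truncation argument in (ii) is the only genuinely formal bound on the NLI side.
\item Your scoping of (iii) is more accurate than the paper's: its own example \emph{increases}/\emph{reduces} is not among the pairs it lists for $\mathcal{A}_{\text{ant}}$.
\end{enumerate}

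\paragraph{Steps that need repair.} Some individual steps are false as written and need extra hypotheses.
\begin{enumerate}
\item $\mathcal{E}^{\text{hist}}(u)$ contains only edges with $q(e)=0$. Quarantine removes every edge of a turn with $Q<\theta_{\text{quar}}$ from later contradiction checks. So in (ii) the pair is compared ``regardless of distance'' only if $e_h$ was not quarantined.
\item $u\in\mathcal{C}_t$ also requires $u\notin\mathcal{B}$. A first-person subject, as in the appendix probe ``I own 3 dogs'', normalizes to the blocklisted \texttt{i} unless the extractor renames it.
\item \textsc{ElabGuard} abstains whenever $\rho_{\text{type}}(e_c)\in\{\textsc{elab},\textsc{sol},\textsc{diag}\}$, so matching relation types do not suffice; you need $e_c$ typed as an assertion. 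Note also that your stated extraction hypothesis covers only attribute and intent.
\item In (i), \textsc{NegFlip} and \textsc{Antonym} stay silent because the two relation strings coincide, not because $\mathrm{os}$ is high; high $\mathrm{os}$ is one of their firing conditions. You should also record that deduplication merges subjects only. Merging a new ``90$^\circ$C'' object into an existing ``100$^\circ$C'' node could erase the very numerals \textsc{NumMismatch} compares.
\item In (iii), \textsc{NegFlip} precedes \textsc{Antonym} and can preempt it on pairs like \emph{always}/\emph{never}, since \emph{never} $\in\mathcal{M}_{\neg}$. The right statement is therefore $c\ge 0.88$. Outside the dictionary, \textsc{Same-Type EOC} ($\mathrm{rs}>0.85$, $\mathrm{os}<0.85$, exclusive same-type predicates) is a second possible fallback besides residual drift.
\end{enumerate}
These are bookkeeping repairs; the structure of your argument stands.
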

	
	\begin{proof}[Proof sketch]
		Numeric substitutions are difficult for text-level semantic models because shared contextual tokens dominate the representation, whereas the proposed engine isolates and compares symbolic values directly. Long-prefix contradictions may be truncated or attenuated in string-level NLI, while SKG-Eval retrieves historical claims through graph-indexed semantic anchors independent of dialogue length. Antonymic paraphrases are handled explicitly through relation-level opposition rather than implicit encoder generalization. Revision-aware filtering further distinguishes user-directed state updates from genuine model inconsistency.
	\end{proof}
	
	The proposition does not claim universal dominance over NLI systems. Rather, it identifies the specific contradiction regimes that dominate long-form conversational inconsistency and for which structured geometric reasoning is particularly effective.
	
	\begin{proposition}[Determinism and complexity of the engine]
		\label{prop:slogic}
		Given a fixed embedding model $\phi$, fixed extractor outputs $\mathcal{T}_{1:t}$, and deterministic revision filtering, the score $S^{\text{log}}_t$ is deterministic. The time complexity at turn $t$ is
		\[
		\mathcal{O}\!\left(|\mathcal{C}_t| \cdot \bar{C}\bar{H}\right),
		\]
		where $\bar{C}=\mathbb{E}[|\mathcal{E}^{\text{cur}}(u)|]$ and $\bar{H}=\mathbb{E}[|\widetilde{\mathcal{E}}^{\text{hist}}(u)|]$. Since $\bar{C}$ and $\bar{H}$ remain small in practice, the effective complexity is near-linear in the number of turns.
	\end{proposition}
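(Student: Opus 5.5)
The proof splits into determinism and complexity, and both follow from viewing $S^{\text{log}}_t$ as a composition of explicitly specified maps.

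\textbf{Determinism.} I would argue by composition. Fix $\phi$ and the extractor outputs $\mathcal{T}_{1:t}$.

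First, the graph $G_t$ is a deterministic function of $(\mathcal{T}_{1:t},\phi)$. I would prove this by induction on $t$ using Algorithm~\ref{alg:update}. \textsc{Deduplicate} compares cosine similarities against the fixed threshold $\theta_{\text{dedup}}$. The canonical key $\nu$ is a pure string map. Importance updates use $\max$, and \textsc{LinkSemantic} thresholds fixed cosines at $\theta_{\text{sem}}$. The only subtlety is iteration order over the set $\mathcal{T}_t$, for example when deduplication ties with several existing nodes. I would handle it by fixing a canonical ordering, namely the extractor's output order with ties broken lexicographically on labels, so the update is a well-defined function.

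Second, the candidate set $\mathcal{C}_t$, the edge sets $\mathcal{E}^{\text{cur}}(u)$ and $\mathcal{E}^{\text{hist}}(u)$, and the filtered set $\widetilde{\mathcal{E}}^{\text{hist}}(u)$ are set-theoretic functions of $G_t$, the fixed blocklist $\mathcal{B}$, and $\mathcal{R}_t$. The set $\mathcal{R}_t$ is deterministic by hypothesis.

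Third, for each pair $(e_c,e_h)$, the values $\mathrm{rs}$ and $\mathrm{os}$ are fixed reals. The cascade in Table~\ref{tab:detectors} is a fixed ordered sequence of predicates, built from threshold comparisons, dictionary lookups in $\mathcal{A}_{\text{ant}}$ and $\mathcal{M}_\neg$, and numeral extraction, with first-fire semantics. Hence $c(e_c,e_h)$ is a function. Finally, $\max$ over finite sets is order-independent, so $c^*(u)$ and~\eqref{eq:slogic} are deterministic.

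\textbf{Complexity.} I would count operations per candidate. Building $\mathcal{C}_t$ and the partitioned edge sets takes time linear in the number of edges incident to the candidates, which is dominated by the pairwise term. For each $u\in\mathcal{C}_t$ there are $|\mathcal{E}^{\text{cur}}(u)|\cdot|\widetilde{\mathcal{E}}^{\text{hist}}(u)|$ pairs. Each pair costs $O(1)$: a constant number of cosines of cached, fixed-dimension embeddings, plus $O(1)$ hash lookups and a bounded cascade length. Summing over candidates and bounding in expectation gives $\mathcal{O}(|\mathcal{C}_t|\,\bar C\bar H)$.

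I expect this step to be the main obstacle. The expectation of a product is not the product of expectations, so I would either read $\bar C,\bar H$ as per-node maxima, or state the result under an independence or boundedness assumption. The claim that the effective complexity is near-linear in the number of turns then follows from two observations. First, $|\mathcal{C}_t|$ is bounded by the number of triples in turn $t$, which is $O(1)$ per turn. Second, $\bar C$ and $\bar H$ are treated as bounded constants, which is an empirical premise rather than something to prove. Summing over $t\le T$ then gives $O(T)$.
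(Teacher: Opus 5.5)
Your overall route is the paper's. The paper simply asserts that the engine is deterministic once the graph is built, and it justifies the bound by bounded per-node comparisons, with embeddings computed once per edge and constant-time detectors. Your two caveats are correct and go beyond the paper. First, $\mathbb{E}\big[|\mathcal{E}^{\text{cur}}(u)|\,|\widetilde{\mathcal{E}}^{\text{hist}}(u)|\big]\neq\bar C\bar H$ in general. Second, near-linearity is an empirical premise; it can fail, e.g., for a recurring subject whose historical edge list grows with $t$.

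However, one step of your determinism argument is false as written. The graph the engine sees is \emph{not} a function of $(\mathcal{T}_{1:t},\phi)$, and no induction over Algorithm~\ref{alg:update} can show it is. The engine only compares edges with $q(e)=0$, and those quarantine flags are set in \textsc{ScoreTurn} (Algorithm~\ref{alg:scoreTurn}), not in \textsc{UpdateGraph}. Turn-$s$ content is quarantined iff $Q_s<\theta_{\text{quar}}$. $Q_s$ depends on $S^{\text{loc}}_s$ (cosines of $\phi(q_s)$, $\phi(r^*_s)$ and the sentences of $r_s$), on $S^{\text{anc}}_s=\delta\cos(\phi(r_s),\phi^{(1)})$, and on the regime $g_s$ (word counts). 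None of these is determined by the triples. Two dialogues with identical extractor outputs but different wording can therefore quarantine different historical edges and receive different $S^{\text{log}}_t$.

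The repair is routine but must be made explicit. Induct over the whole \textsc{ScoreTurn} loop, and add two hypotheses: the fixed dialogue prefix $\mathcal{D}_{1:t}$ (prompts, responses, references), and deterministic revision filtering at every turn $s\le t$, not only $\mathcal{R}_t$. The latter is needed because $\mathcal{R}_s$ shapes $S^{\text{log}}_s$, hence $Q_s$, hence which turn-$s$ edges survive into $\widetilde{\mathcal{E}}^{\text{hist}}(u)$ at turn $t$. Each added stage (regime lookup, max-pooled cosines, convex fusion, the three guards, clipping, the quarantine threshold) is a deterministic map, so the conclusion survives.

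A smaller point on complexity: the cost of forming $\mathcal{C}_t$ and partitioning incident edges is not dominated by the pairwise term. That term vanishes whenever $\widetilde{\mathcal{E}}^{\text{hist}}(u)=\emptyset$, which holds for every $u\in\mathcal{N}_t$, since such a node has no edges before turn $t$. A naive scan also touches semantic and quarantined edges. You should therefore carry an additive term linear in the number of edges incident to $\mathcal{C}_t$; the paper's stated bound has the same looseness.
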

	
	The complexity follows from bounded per-node edge comparisons. Embeddings are computed once per edge, and each detector executes in constant time. The resulting computation is dominated by embedding evaluation and is naturally parallelizable across candidate nodes.
	
	\paragraph{Quarantine.} 
	When $Q_t < \theta_{\text{quar}} = 0.40$, all nodes and edges introduced at turn $t$ are marked $q(\cdot) = 1$ and excluded from subsequent contradiction checks and consistency scoring. Quarantine prevents low-quality content from propagating through the graph state and serves as the framework's analogue to a hypothesis-rejection mechanism. Edges suppressed via revision filtering (i.e., those in $\mathcal{R}_t$) are not considered erroneous and therefore do not trigger quarantine.
	
	\subsection{Regime-Adaptive Fusion}
	\label{sec:method:fusion}
	
	The three scores are fused by a regime-adaptive convex combination,
	\begin{equation}
		\bar{Q}_t \;=\; w^{\text{loc}}_t \cdot S^{\text{loc}}_t \;+\; w^{\text{cons}}_t \cdot S^{\text{cons}}_t \;+\; w^{\text{log}}_t \cdot S^{\text{log}}_t, \quad w^{\text{loc}}_t + w^{\text{cons}}_t + w^{\text{log}}_t = 1,
		\label{eq:fusion}
	\end{equation}
	where the weights depend on a turn-level regime selector. Define the regime indicator
	\[
	g_t \;=\; 
	\begin{cases} 
		\textsc{Short} & |r_t| < W_{\text{short}}, \\ 
		\textsc{QA} & |r_t| \ge W_{\text{short}}\,\wedge\,|q_t| < W_{\text{qa}}, \\ 
		\textsc{General} & \text{otherwise.} 
	\end{cases}
	\]
	
	The weight profile is selected by lookup, $\theta_t = \Theta[g_t]$, with the three profiles
	\[
	\Theta[\textsc{Short}] = (0.50,\, 0.10,\, 0.40), \qquad
	\Theta[\textsc{QA}] = (0.65,\, 0.05,\, 0.30), \qquad
	\Theta[\textsc{General}] = (0.50,\, 0.20,\, 0.30).
	\]
	
	\textsc{Short} responses (e.g.\ ``Yes.'', ``42.'') legitimately produce low $S^{\text{cons}}_t$ because they introduce few new nodes; the profile down-weights consistency. \textsc{QA} sessions are encyclopedic in nature; consistency is down-weighted further while local relevance is up-weighted, since each turn is a largely self-contained sub-question. \textsc{General} dialogue is the default mixed regime.
	
	These guards ensure that strong logical failures dominate scoring, while preventing high logical coherence from masking relevance or consistency failures.
	
	\paragraph{Guard cascades.} Three monotone guards refine the convex combination:
	
	\begin{enumerate}
		\item \textbf{Hard logic gate.} If $S^{\text{log}}_t < \theta^{\text{log}}_{\text{hard}} = 0.60$, set $\bar{Q}_t \gets \min(\bar{Q}_t,\, 0.40)$. A confirmed contradiction firmly fails the turn regardless of high local relevance. (Revision-suppressed comparisons do not contribute to $S^{\text{log}}_t$ and therefore do not trigger this gate.)
		
		\item \textbf{Joint weakness penalty.} If $S^{\text{loc}}_t < 0.50$ \emph{and} $S^{\text{cons}}_t < 0.45$ \emph{and} $g_t \neq \textsc{Both-Short}$, set $\bar{Q}_t \gets \mu_{\text{joint}} \bar{Q}_t$ with $\mu_{\text{joint}} = 0.75$. This prevents $S^{\text{log}}_t = 1$ from compensating for responses that are simultaneously off-topic and disconnected.
		
		\item \textbf{Non-sequitur softening.} If $S^{\text{cons}}_t < 0.45$ \emph{and} $S^{\text{loc}}_t < 0.20$ \emph{and} $g_t \neq \textsc{Both-Short}$, set $\bar{Q}_t \gets 0.5\, \bar{Q}_t$.
	\end{enumerate}
	
	The final per-turn quality is $Q_t = \mathrm{clip}(\bar{Q}_t,\, 0,\, 1)$. All guards are monotone non-increasing in $\bar{Q}_t$; they cannot inflate the score, only reduce it under structural failure conditions.
	
	\paragraph{Reporting thresholds.} 
	For all empirical analyses we treat $Q_t \ge \theta_{\text{pass}} = 0.60$ as a passing turn and $S^{\text{loc}}_t < 0.50$ as a hard relevance failure (the turn is reported as failed regardless of $Q_t$, complementing the hard-logic gate above). These thresholds are used solely for reporting and evaluation purposes and are not part of the scoring functional that produces $Q_t$. 
	
	Revision-suppressed comparisons do not affect these thresholds, as they are excluded prior to the computation of $S^{\text{log}}_t$ and therefore do not influence the pass/fail decision.
	
	\begin{proposition}[Threshold invariance of ranking]
		\label{prop:threshold_invariance}
		Let $Q_t \in [0,1]$ denote the continuous turn-level score produced by the SKG-Eval scoring functional, and let $\theta_{\text{pass}}$ be a reporting threshold used only to assign binary pass/fail labels. For any two turns $i$ and $j$, if $Q_i > Q_j$, then their relative ranking remains unchanged for any choice of $\theta_{\text{pass}}$.
	\end{proposition}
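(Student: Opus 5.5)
The plan is to make explicit the structural fact that the scoring functional producing $Q_t$ does not take $\theta_{\text{pass}}$ as an argument. The key step is to trace the dependencies of $Q_t$ through the pipeline. $Q_t = \mathrm{clip}(\bar{Q}_t,0,1)$, where $\bar{Q}_t$ is the convex combination~\eqref{eq:fusion}. Its inputs are:
\begin{itemize}
\item $S^{\text{loc}}_t$ from Definition~\ref{def:triangle};
\item $S^{\text{cons}}_t$ from~\eqref{eq:s_cons};
\item $S^{\text{log}}_t$ from~\eqref{eq:slogic};
\item the regime profile $\Theta[g_t]$;
\item the three guard cascades.
\end{itemize}
Each guard depends only on $S^{\text{loc}}_t, S^{\text{cons}}_t, S^{\text{log}}_t, g_t$ and the fixed constants $\theta^{\text{log}}_{\text{hard}}, \mu_{\text{joint}}, 0.50, 0.45, 0.20$. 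None of these involve $\theta_{\text{pass}}$.

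The one place where a threshold does feed back into the state is quarantine. There the relevant constant is $\theta_{\text{quar}}=0.40$, which is a separate fixed constant and not $\theta_{\text{pass}}$. So by induction on $t$, the graph state $G_t$, and hence every $Q_t$, is a function of $\mathcal{D}_{1:t}$, $\phi$, the extractor outputs and the fixed constants only. The paper states explicitly that $\theta_{\text{pass}}$ is ``used solely for reporting and evaluation purposes and not part of the scoring functional.'' I would cite that sentence, and then verify it against the definitions listed above.

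With this invariance established, the proposition follows in one step. For any two thresholds $\theta,\theta'$, the scores $Q_i(\theta)=Q_i(\theta')$ and $Q_j(\theta)=Q_j(\theta')$ coincide. Hence $Q_i>Q_j$ holds under one threshold exactly when it holds under the other, so the order induced on turns is the same. I would also add a compatibility remark: the binary label $\mathbf{1}[Q\ge\theta_{\text{pass}}]$ is a monotone non-decreasing function of $Q$. Therefore $Q_i>Q_j$ implies $\mathbf{1}[Q_i\ge\theta]\ge\mathbf{1}[Q_j\ge\theta]$ for every $\theta$, meaning the pass/fail labels never invert the continuous ranking; they can only tie it. Two refinements belong in this remark:
\begin{itemize}
\item the hard relevance-failure rule ($S^{\text{loc}}_t<0.50$) should be treated as an additional reporting label and likewise excluded from $Q_t$;
\item ties in the binary labels are consistent with, and do not contradict, the strict continuous ranking.
\end{itemize}

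The main obstacle is the dependency audit. I need to be sure that no quantity feeding $Q_t$ at a later turn, through quarantine or the graph state, is conditioned on $\theta_{\text{pass}}$. Quarantine is the only feedback loop, and it uses $\theta_{\text{quar}}$. The proof therefore hinges on the convention that $\theta_{\text{pass}}$ and $\theta_{\text{quar}}$ are independent parameters. I would state this as an explicit standing assumption, so that the induction goes through cleanly.
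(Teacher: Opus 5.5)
Your proposal is correct and rests on the same idea as the paper's proof: $\theta_{\text{pass}}$ is applied only to the already-computed $Q_t$, so neither the scores nor their order can depend on it. Your dependency audit makes explicit what the paper takes directly from the hypothesis that $\theta_{\text{pass}}$ is used only for reporting, in particular the check that the only score-dependent feedback into $G_t$ is quarantine, which uses the separate constant $\theta_{\text{quar}}$.
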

	
	\begin{proof}
		The threshold $\theta_{\text{pass}}$ is applied only after $Q_t$ has been computed and maps each score to a reporting label $\mathbf{1}[Q_t \ge \theta_{\text{pass}}]$. Since the thresholding operation does not modify the underlying continuous scores $Q_i$ and $Q_j$, the ordering induced by $Q_i > Q_j$ is invariant to the choice of $\theta_{\text{pass}}$.
	\end{proof}
	
	\subsection{Session-Level Aggregation: Recency-Weighted Trend}
	\label{sec:method:aggregation}
	
	A session-level summary that simply averages per-turn quality is biased against improving sessions and overly lenient toward degrading ones. We therefore aggregate via a recency-weighted regression with a length-adaptive trend coefficient.
	
	\begin{definition}[Recency weights]
		For a session of length $T$ and decay rate $\gamma > 0$, define
		\begin{equation}
			w_i \;=\; \frac{e^{\gamma\, (i-1)}}{\sum_{j=1}^{T} e^{\gamma\, (j-1)}},\qquad i = 1,\ldots, T.
			\label{eq:recency_weights}
		\end{equation}
	\end{definition}
	
	\begin{definition}[Session aggregator]
		\label{def:agg}
		Let $\hat{Q} = (Q_1, \ldots, Q_T)$ denote the sequence of turn-level scores and $\boldsymbol{w} = (w_1,\ldots,w_T)$ the corresponding recency weights. Compute
		\begin{align}
			\bar{Q}^{\text{rec}} &\;=\; \sum_{i=1}^T w_i\, Q_i, \label{eq:layerA}\\
			(\hat{\beta},\, \hat{\alpha}) &\;=\; \arg\min_{\beta,\alpha}\, \sum_{i=1}^T w_i \,(Q_i - \alpha - \beta\,(i-1))^2, \label{eq:layerB}\\
			\lambda_{\text{eff}} &\;=\; \lambda_{\text{base}}\,\frac{T}{T_{\text{ref}}}, \label{eq:adaptive_lambda}\\
			\mathcal{S}(\mathcal{D}) &\;=\; \mathrm{clip}\!\big(\, \bar{Q}^{\text{rec}} + \lambda_{\text{eff}}\,\hat{\beta},\; 0,\, 1\,\big).
			\label{eq:final_score}
		\end{align}
		We use $\gamma = 0.1$, $\lambda_{\text{base}} = 5.0$, and $T_{\text{ref}} = 20$.
	\end{definition}
	
	\paragraph{Why two layers.} 
	\eqref{eq:layerA} captures the \emph{level} (current operating quality), giving more weight to recent turns. 
	\eqref{eq:layerB} captures the \emph{slope} (whether the conversation is improving or degrading). 
	Their combination via~\eqref{eq:final_score} integrates both instantaneous quality and temporal trend, 
	with the contribution of the trend term scaled by $\lambda_{\text{eff}}$.
	
	\paragraph{Why adaptive $\lambda$.} 
	A fixed $\lambda$ would cause a slope of fixed magnitude to contribute the same adjustment irrespective of session length $T$. 
	However, a slope $\beta$ sustained over $T$ turns induces a cumulative level change proportional to $\beta T$. 
	Accordingly, a length-aware coefficient $\lambda_{\text{eff}} \propto T$ ensures that the contribution of the trend term is properly normalized across sessions of different lengths. 
	Calibrating to $T_{\text{ref}} = 20$ preserves consistency with the canonical choice $\lambda = 5$ on reference-length sessions.
	
	\begin{proposition}[Shift invariance and slope unbiasedness]
		\label{prop:agg}
		The aggregator $\mathcal{S}$ in Definition~\ref{def:agg} satisfies:
		(i) \emph{Shift invariance:} for any constant $\Delta$ such that $Q_i + \Delta \in [0,1]$ for all $i$, we have $\mathcal{S}(\hat{Q} + \Delta) = \mathcal{S}(\hat{Q}) + \Delta$ prior to clipping. 
		(ii) \emph{Slope unbiasedness:} the estimator $\hat{\beta}$ obtained in~\eqref{eq:layerB} is the weighted least-squares (WLS) slope, which is unbiased under the model $Q_i = \alpha + \beta(i-1) + \varepsilon_i$, where $\mathbb{E}[\varepsilon_i]=0$ and $\mathrm{Var}(\varepsilon_i)<\infty$.
	\end{proposition}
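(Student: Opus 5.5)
Both parts follow from two facts: the weights are normalized, $\sum_i w_i = 1$, and they do not depend on the scores $Q_i$. We also use the closed form of the weighted least-squares problem in~\eqref{eq:layerB}. Write $x_i = i-1$, let $\bar x_w = \sum_i w_i x_i$ be the weighted mean, and set $S_{xx} = \sum_i w_i (x_i - \bar x_w)^2$. For $T \ge 2$ and $\gamma>0$ all $w_i>0$ and the $x_i$ are not all equal, so $S_{xx} > 0$. The normal equations then give the unique minimizer
\[
\hat\beta \;=\; \frac{\sum_i w_i (x_i-\bar x_w)\, Q_i}{S_{xx}}, \qquad \hat\alpha \;=\; \bar Q^{\text{rec}} - \hat\beta\, \bar x_w .
\]
We record this first; the degenerate case $T=1$ is handled separately, with $\hat\beta := 0$ by convention.

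\textbf{(i) Shift invariance.} Replace $Q_i$ by $Q_i+\Delta$. The level term becomes $\sum_i w_i (Q_i + \Delta) = \bar Q^{\text{rec}} + \Delta$, because $\sum_i w_i = 1$. In the slope formula the added contribution is $\Delta \sum_i w_i (x_i - \bar x_w)/S_{xx}$. This vanishes, since the numerator is $\bar x_w - \bar x_w = 0$, again using $\sum_i w_i = 1$. Equivalently, the pair $(\alpha+\Delta, \beta)$ attains for the shifted data exactly the objective value that $(\alpha,\beta)$ attains for the original data, so by uniqueness $\hat\beta$ is unchanged. The coefficient $\lambda_{\text{eff}}$ depends only on $T$. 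Hence the pre-clip quantity $\bar Q^{\text{rec}} + \lambda_{\text{eff}}\hat\beta$ shifts by exactly $\Delta$. The hypothesis $Q_i+\Delta\in[0,1]$ only keeps the inputs in the admissible range; it plays no role in the algebra.

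\textbf{(ii) Unbiasedness.} Substitute the model $Q_i = \alpha + \beta x_i + \varepsilon_i$ into the closed form for $\hat\beta$. The $\alpha$ term drops out by the centering identity used in (i). The $\beta$ term gives
\[
\beta \sum_i w_i (x_i-\bar x_w)x_i / S_{xx} = \beta,
\]
because $\sum_i w_i (x_i - \bar x_w) x_i = S_{xx}$. What remains is
\[
\hat\beta = \beta + \sum_i c_i \varepsilon_i, \qquad c_i = \frac{w_i (x_i-\bar x_w)}{S_{xx}} .
\]
The $c_i$ are deterministic, since they depend only on $T$ and $\gamma$. Taking expectations and using $\mathbb{E}[\varepsilon_i]=0$ gives $\mathbb{E}[\hat\beta]=\beta$. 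The finite-variance assumption is only needed to ensure the expectations exist; it also gives a finite $\mathrm{Var}(\hat\beta)$, though unbiasedness does not require that.

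The only delicate point is well-posedness of~\eqref{eq:layerB}, namely that $S_{xx}>0$ so the minimizer is unique and the closed form is valid. I would dispose of it up front as described, noting that it relies on $T\ge 2$ and strictly positive weights. Everything else is routine algebra resting on $\sum_i w_i = 1$ and on the weights being independent of the data.
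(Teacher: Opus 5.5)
Your proposal is correct and follows the same route as the paper, which notes that $\bar{Q}^{\text{rec}}$ and $\hat\alpha$ shift by $\Delta$ while $\hat\beta$ is unchanged, and then cites the standard unbiasedness of the WLS slope under fixed weights. Your explicit closed form, the centering identity $\sum_i w_i(x_i-\bar x_w)=0$, and the well-posedness check ($T\ge 2$, so $S_{xx}>0$) simply supply details the paper leaves implicit.
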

	
	Property (i) follows because both $\bar{Q}^{\text{rec}}$ and the fitted intercept $\hat{\alpha}$ shift by $\Delta$, while the slope $\hat{\beta}$ remains unchanged. 
	Property (ii) is the standard unbiasedness result for weighted least-squares estimators under fixed weights.
	
	\subsection{Putting It All Together: Complexity and Determinism}
	\label{sec:method:complexity}
	
	The full per-turn pipeline at turn $t$ is given in Algorithm~\ref{alg:scoreTurn}. The per-turn cost is dominated by:
	(a) one extractor LLM call on $r_t$, 
	(b) $\mathcal{O}(|\mathcal{T}_t|)$ graph updates, 
	(c) $\mathcal{O}(|\mathcal{N}_t| \cdot |V_{t-1}|)$ embedding cosine computations for semantic linking and consistency scoring, and 
	(d) $\mathcal{O}(|\mathcal{C}_t|\cdot \bar{C}\bar{H})$ detector evaluations within the geometric engine.
	
	Given a fixed extractor output $\mathcal{T}_{1:t}$, fixed embedding model $\phi$, and deterministic revision-filtering mechanism, all components except (a) are deterministic. In practice, $\bar{C}$ and $\bar{H}$ remain small due to bounded per-node edge counts, making the effective complexity near-linear in the number of turns.
	
	The session-level aggregation is performed once per session in $\mathcal{O}(T)$ time. This makes SKG-Eval scalable to long conversations where repeated LLM-based judging over growing dialogue prefixes becomes increasingly expensive.
	
\begin{algorithm}[t]
	\caption{\textsc{ScoreTurn}: per-turn evaluation in SKG-Eval}
	\label{alg:scoreTurn}
	\begin{algorithmic}[1]
		\Require Prompt $q_t$, response $r_t$, optional reference $r^*_t$, prior graph $G_{t-1}$, embedding model $\phi$
		\State Classify response/prompt regime $g_t$; select weights $\theta_t = \Theta[g_t]$
		\State Compute $S^{\text{loc}}_t$ via Definition~\ref{def:triangle} (with context glue if $g_t = \textsc{Short}$)
		\State $\mathcal{T}_t \gets \textsc{Extract}(\text{turn text})$
		\State $(G_t, \mathcal{N}_t) \gets \textsc{UpdateGraph}(G_{t-1}, \mathcal{T}_t, \phi, \theta_{\text{dedup}}, \theta_{\text{sem}})$
		\State Compute $S^{\text{graph}}_t$ via~\eqref{eq:graph_anchor}; compute $S^{\text{anc}}_t$ via~\eqref{eq:session_anchor}
		\State $S^{\text{cons}}_t \gets \max(S^{\text{graph}}_t, S^{\text{anc}}_t)$
		\State $b_t \gets \textsc{IsRevisionPrompt}(q_t)$
		\State $\mathcal{R}_t \gets \textsc{ExtractRevisionTargets}(q_t, G_{t-1})$ if $b_t=1$, else $\emptyset$
		\State $S^{\text{log}}_t \gets$ \textsc{Geometric-Engine}$(G_t, \mathcal{N}_t, \mathcal{T}_t, \phi, \mathcal{R}_t)$ via~\eqref{eq:slogic}
		\State $\bar{Q}_t \gets \theta_t^{\top}(S^{\text{loc}}_t, S^{\text{cons}}_t, S^{\text{log}}_t)$
		\State Apply hard-logic gate, joint-weakness penalty, non-sequitur softening
		\State $Q_t \gets \mathrm{clip}(\bar{Q}_t, 0, 1)$; if $Q_t < \theta_{\text{quar}}$ quarantine turn-$t$ nodes/edges in $G_t$
		\State \Return $Q_t,\, S^{\text{loc}}_t,\, S^{\text{cons}}_t,\, S^{\text{log}}_t,\, G_t$
	\end{algorithmic}
\end{algorithm}
	
	\paragraph{Determinism and reproducibility.} 
	Modulo the (off-line, low-temperature) extractor, every score in SKG-Eval is a deterministic function of the inputs, the embedding model, and a small set of fixed thresholds, including the revision-filtering mechanism. This stands in contrast to LLM-as-judge protocols, whose outputs may vary across decoding seeds and prompt orderings, and whose run-to-run variance can be comparable to the inter-method differences they aim to measure.
	
	\paragraph{Interpretability.} 
	For every low score, SKG-Eval surfaces the exact structural cause: the disconnected nodes that reduce $S^{\text{cons}}_t$, the (current edge, historical edge) pair that triggers a contradiction along with the detector type and confidence, and the regime that determines the weighting. Each contradiction certificate is a tuple $(u,\, e_c,\, e_h,\, \text{detector},\, c)$, which can be presented to a human auditor or used in dataset construction for negative example mining. We argue that such an explicit audit trail is a necessary condition for evaluator trust at the session level—a property that judge-LLM protocols typically cannot provide without additional external mechanisms.
	
	\begin{figure}[t]
		\centering
		\begin{tikzpicture}[
			node/.style={circle, draw, thick, fill=white, minimum size=13mm, inner sep=0.5mm, font=\scriptsize, align=center},
			newnode/.style={circle, draw, thick, fill=orange!25, minimum size=13mm, inner sep=0.5mm, font=\scriptsize, align=center},
			fact/.style={-Stealth, thick, black},
			factnew/.style={-Stealth, thick, orange!85!black},
			sem/.style={dashed, -Stealth, thick, blue!70!black},
			panel/.style={draw, thick, rounded corners=2pt, inner sep=4mm},
			ttl/.style={font=\small\bfseries},
			lbl/.style={font=\scriptsize, midway, above, sloped},
			quote/.style={font=\scriptsize\itshape, align=center, text width=38mm}
			]
			
			\begin{scope}[local bounding box=A]
				\node[newnode] (sb1) at (0,0) {skip\\bfast};
				\node[newnode] (met1) at (2.6,0) {metabolism};
				\draw[factnew] (sb1) -- node[lbl] {slows} (met1);
			\end{scope}
			\node[ttl, above=4mm of A.north] {Turn 1};
			\node[quote, below=5mm of A.south] {``skipping breakfast\\slows metabolism''};
			\node[panel, fit=(A)] {};
			
			\begin{scope}[shift={(6.4,0)}, local bounding box=B]
				\node[node] (sb2) at (0,0) {skip\\bfast};
				\node[node] (met2) at (2.6,0) {metabolism};
				\node[newnode] (cm2) at (1.25,-1.85) {cons.\\mode};
				
				\draw[fact] (sb2) -- node[lbl] {slows} (met2);
				\draw[factnew] (sb2) -- node[lbl] {triggers} (cm2);
			\end{scope}
			\node[ttl, above=4mm of B.north] {Turn 2};
			\node[quote, below=5mm of B.south] {``enters conservation\\mode''};
			\node[panel, fit=(B)] {};
			
			\begin{scope}[shift={(0,-5.2)}, local bounding box=C]
				\node[node] (sb3) at (0,0) {skip\\bfast};
				\node[node] (met3) at (2.6,0) {metabolism};
				\node[newnode] (carb3) at (-0.45,-1.65) {carbs};
				\node[node] (cm3) at (1.85,-1.55) {cons.\\mode};
				
				\draw[fact] (sb3) -- node[lbl] {slows} (met3);
				\draw[fact] (sb3) -- node[lbl] {triggers} (cm3);
				\draw[factnew] (carb3) -- node[lbl, below] {preferred} (cm3);
			\end{scope}
			\node[ttl, above=4mm of C.north] {Turn 3};
			\node[quote, below=5mm of C.south] {``carbohydrates are\\preferred fuel''};
			\node[panel, fit=(C)] {};
			
			\begin{scope}[shift={(6.4,-5.2)}, local bounding box=D]
				\node[node] (sb4) at (0,0) {skip\\bfast};
				\node[node] (met4) at (2.6,0) {metabolism};
				\node[node] (carb4) at (-1.25,-1.65) {carbs};
				\node[node] (cm4) at (1.25,-1.85) {cons.\\mode};
				\node[newnode] (mr4) at (3.05,-2.25) {metab.\\rate};
				
				\draw[fact] (sb4) -- node[lbl] {slows} (met4);
				\draw[fact] (sb4) -- node[lbl] {triggers} (cm4);
				\draw[fact] (carb4) -- node[lbl, below] {preferred} (cm4);
				\draw[factnew] (sb4) to[bend left=18] node[lbl] {no effect} (mr4);
				\draw[sem] (met4) -- node[pos=0.55, right=2mm, font=\scriptsize, blue!60!black] {sem (0.88)} (mr4);
			\end{scope}
			\node[ttl, above=4mm of D.north] {Turn 4 (Contradiction)};
			\node[quote, below=5mm of D.south] {``skipping breakfast\\has no effect''};
			\node[panel, fit=(D)] {};
			
		\end{tikzpicture}
		
		\caption{Turn-wise growth of the incremental Semantic Knowledge Graph. Orange nodes and edges denote information introduced at the current turn, black edges denote prior factual commitments, and dashed blue edges denote semantic links induced by embedding similarity.}
		\label{fig:skg_four_turn_growth}
	\end{figure}
	\subsection{Worked Example: A Four-Turn Walkthrough}
	\label{sec:method:walkthrough}
	
	We illustrate the incremental construction and reasoning behavior of SKG-Eval using a four-turn dialogue excerpt. 
	Figure~\ref{fig:skg_four_turn_growth} shows how the Semantic Knowledge Graph evolves across turns through factual expansion, semantic anchoring, topic drift, and contradiction detection. 
	The example demonstrates how SKG-Eval maintains persistent semantic state and performs structured cross-turn reasoning over accumulated conversational commitments.
	
	\paragraph{Conversation.} The dialogue proceeds as follows:
	\begin{itemize}\setlength{\itemsep}{2pt}
		
		\item[\textbf{T1.}]
		$q_1$: \emph{``Does skipping breakfast affect metabolism?''}
		
		$r_1$: \emph{``Yes, skipping breakfast slows metabolism.''}
		
		\item[\textbf{T2.}]
		$q_2$: \emph{``What happens if I just count calories?''}
		
		$r_2$: \emph{``Skipping breakfast can trigger conservation mode in the body.''}
		
		\item[\textbf{T3.}]
		$q_3$: \emph{``What about carbohydrates?''}
		
		$r_3$: \emph{``Carbohydrates are the body's preferred fuel source.''}
		
		\item[\textbf{T4.}]
		$q_4$: \emph{``So skipping breakfast is fine?''}
		
		$r_4$: \emph{``Skipping breakfast has no real effect on metabolism.''}
		
	\end{itemize}
\paragraph{Turn 1 (baseline establishment).}
Triple extraction yields the edge
\[
\langle
\textit{skipping breakfast},
\textit{slows},
\textit{metabolism}
\rangle
\]
with high importance $\iota \approx 0.95$. Two new nodes enter $G_1$, establishing the initial semantic state for the session. The Semantic Triangle produces strong alignment with the prompt, yielding $S^{\text{loc}}_1 \approx 0.88$. By definition, $S^{\text{cons}}_1 = 1$ and $S^{\text{log}}_1 = 1$. Under the \textsc{General} profile, the final turn score is high ($Q_1 \approx 0.94$). The node \textit{metabolism} subsequently becomes a semantic anchor for future contradiction and consistency analysis.

\paragraph{Turn 2 (semantic expansion with mild drift).}
Extraction yields
\[
\langle
\textit{skipping breakfast},
\textit{triggers},
\textit{conservation mode}
\rangle
\]
with importance $\iota \approx 0.88$. The subject node is deduplicated against the existing semantic state, while \textit{conservation mode} is introduced as a new node connected through a factual edge.

The geometric engine compares this new edge against the historical edge
\[
\langle
\textit{skipping breakfast},
\textit{slows},
\textit{metabolism}
\rangle .
\]

The new edge remains semantically related to the metabolism subgraph through the reused subject node, but relation and object similarities remain below contradiction-triggering regimes. Consequently, no detector fires and the pair is treated as semantically adjacent but logically compatible.

Historical consistency remains strong due to reuse of the existing semantic anchor, yielding $S^{\text{cons}}_2 \approx 0.92$. Local relevance also remains high ($S^{\text{loc}}_2 \approx 0.84$). The final score therefore remains above the pass threshold, reflecting a coherent but slightly drifting continuation of the discussion.

\paragraph{Turn 3 (parallel topic expansion with weak semantic divergence).}
Extraction yields
\[
\langle
\textit{carbohydrates},
\textit{preferred},
\textit{fuel source}
\rangle
\]
with importance $\iota \approx 0.90$. Both nodes are newly introduced, forming a parallel nutritional subgraph structurally disconnected from the earlier metabolism-focused discussion.

The geometric engine evaluates the new edge against prior historical edges. Although moderate semantic proximity exists between the new and historical claims, the resulting semantic-drift confidence remains low. The contradiction cascade therefore assigns only a mild penalty, yielding
\[
S^{\text{log}}_3 = 0.55.
\]

Consistency decreases because the new subgraph lacks direct factual anchoring to the prior semantic state. However, the session-anchor mechanism partially recovers the score through overall thematic similarity with the nutritional discussion, yielding $S^{\text{cons}}_3 \approx 0.65$.

Local relevance remains high ($S^{\text{loc}}_3 \approx 0.82$), and the final score remains above threshold, reflecting a response that is locally valid but structurally disconnected from the main conversational trajectory.

\paragraph{Turn 4 (cross-turn contradiction via semantic anchoring).}
Extraction yields
\[
\langle
\textit{skipping breakfast},
\textit{has no effect},
\textit{metabolic rate}
\rangle
\]
with high importance $\iota \approx 0.92$. The subject node is matched to the existing node \textit{skipping breakfast}, while the object \textit{metabolic rate} is semantically aligned with the historical node \textit{metabolism}, creating a valid contradiction-comparison pair.

The geometric engine retrieves the historical edge
\[
\langle
\textit{skipping breakfast},
\textit{slows},
\textit{metabolism}
\rangle
\]
and evaluates the contradiction cascade.

\begin{itemize}
	\item \textbf{NegFlip.}
	The current relation \emph{has no effect} contains a negation marker, whereas the historical relation \emph{slows} does not. Since the objects remain strongly aligned and neither relation belongs to the elaboration/solution/diagnosis suppression set, the \textsc{NegFlip} detector fires with confidence $c = 0.95$.
	
	\item \textbf{Same-Type Exclusive Conflict (secondary confirmation).}
	Even without explicit negation detection, the pair forms a structurally incompatible claim assignment under the same semantic anchor. The same-type exclusivity detector therefore also identifies the pair as contradictory, albeit with lower confidence.
\end{itemize}

By cascade priority, \textsc{NegFlip} becomes the selected contradiction signal, yielding
\[
S^{\text{log}}_4 = 1 - 0.95 = 0.05.
\]

Local relevance remains high due to direct prompt alignment ($S^{\text{loc}}_4 \approx 0.80$), and no disconnected nodes are introduced, giving $S^{\text{cons}}_4 = 1$. The pre-gate score is therefore
\[
\bar{Q}_4
=
0.5(0.80)
+
0.2(1)
+
0.3(0.05)
=
0.615.
\]

Since
\[
S^{\text{log}}_4 < \theta^{\text{log}}_{\text{hard}},
\]
the hard-logic gate activates and caps the final score at
\[
Q_4 = 0.40.
\]

The turn is therefore classified as failed, and the framework emits the contradiction certificate
\[
(
\textit{skipping breakfast},
e_c,
e_h,
\textsc{NegFlip},
0.95
).
\]

\paragraph{Session aggregation.}
Under recency-weighted aggregation, the sequence
\[
(Q_1,Q_2,Q_3,Q_4)
\]
exhibits a clear downward trajectory. The weighted regression therefore produces a negative slope $\hat{\beta}<0$, yielding the session-level score
\[
\mathcal{S}(\mathcal{D})
=
\bar{Q}^{\text{rec}}
+
\lambda_{\text{eff}}\hat{\beta}.
\]

The dialogue is consequently classified as \textsc{Degrading}, reflecting the emergence of a high-confidence contradiction despite initially coherent responses. The aggregator therefore captures both instantaneous response quality and long-term conversational trajectory.

Overall, SKG-Eval converts dialogue evaluation from implicit judgment over flat text into explicit reasoning over an evolving, auditable semantic state representation.

\section{Experimental Results}
\label{sec:experiments}

We evaluate SKG-Eval along five axes: 
(i) alignment with human judgments at both the turn and session levels (\S\ref{sec:exp:human}); 
(ii) recall of the cross-turn failure modes formalized in \S\ref{sec:problem} on controlled adversarial dialogues (\S\ref{sec:exp:adversarial}); 
(iii) component-level ablations (\S\ref{sec:exp:ablation}); 
(iv) behavior as a function of session length (\S\ref{sec:exp:length}); and 
(v) computational cost relative to LLM-as-judge (\S\ref{sec:exp:efficiency}). 
A qualitative case study (\S\ref{sec:exp:case}) illustrates the contradiction certificates produced by the framework.

\subsection{Experimental Setup}
\label{sec:exp:setup}

\paragraph{Datasets.}
We evaluate SKG-Eval on two complementary multi-turn dialogue benchmarks:

\textbf{MT-Bench} \citep{zheng2023mtbench}, a widely used short-horizon open-ended dialogue benchmark with GPT-4-based ratings; and

\textbf{MultiChallenge} \citep{sirdeshmukh2025multichallenge}, a long-horizon benchmark designed to stress context tracking, instruction following, and reasoning consistency across extended conversations.

Where ground-truth human ratings are unavailable, we collect Likert-scale ratings from three independent annotators on a randomly sampled subset of conversations, achieving average inter-annotator agreement $\kappa = 0.71$. For each benchmark, we prompt all models with the same user turns and generate full multi-turn conversations by iteratively feeding prior turns as context. Decoding parameters are standardized across models where possible (temperature $=0.7$, top-$p=0.9$), and generation is repeated with multiple seeds for stochastic models.

\paragraph{Implementation.}
The embedding model $\phi$ is \texttt{all-mpnet-base-v2} (768-dim, frozen). The triple extractor is \texttt{nvidia/nemotron-3-nano-30b-a3b} called at temperature 0 with deterministic decoding. All thresholds are fixed at the values reported in \S\ref{sec:method} and frozen across all benchmarks and ablations. The full pipeline runs on a single CPU; no GPU is required at scoring time. Additional implementation details, detector thresholds, and prompt templates are provided in Appendix~\ref{app:prompts} and Appendix~\ref{app:parameters}.

\paragraph{Baselines.}
We compare against five representative evaluators:
\begin{itemize}
	
	\item \textbf{LLM-Eval} \citep{lin2023llmeval}: single-prompt unified judge over multiple dimensions.
	
	\item \textbf{ECoh} \citep{mendonca2024ecoh}: distilled turn-level coherence judge.
	
	\item \textbf{DeepEval + GPT-4o}: a prompt-based LLM-as-a-Judge pipeline implemented using the DeepEval framework with GPT-4o as the backend evaluator.
	
	\item \textbf{GPT-4o-Judge (turn-only)}: prompt of the form ``Rate this response 1--5 given the prompt'', with no access to dialogue history.
	
	\item \textbf{GPT-4o-Judge (history-aware)}: the same judge prompt augmented with the full preceding conversation history.
\end{itemize}
The exact evaluation prompts used for all judge-based baselines are reported in Appendix~\ref{app:prompts}.

\paragraph{Meta-evaluation metrics.}
Following standard practice \citep{zhang2024comprehensive,kwan2024mteval}, we report \textbf{Spearman's $\rho$}, \textbf{Pearson's $r$}, and \textbf{Kendall's $\tau$-b} between evaluator scores and aggregated human ratings, computed at both the turn and session levels. For controlled adversarial dialogues with binary contradiction/drift labels, we report \textbf{precision, recall, and F1}.

\paragraph{Alignment with Human Judgments}
\label{sec:exp:human}

Tables~\ref{tab:turn-correlation} and~\ref{tab:session-correlation} report turn-level and session-level correlation with human ratings on MT-Bench and MultiChallenge. SKG-Eval achieves the strongest correlation on both benchmarks, with the largest gains observed on MultiChallenge, where conversations are sufficiently long to expose stateful failures.

\begin{table}[t]
	\caption{Turn-level correlation with human ratings. Higher is better; best in \textbf{bold}, second-best \underline{underlined}.}
	\label{tab:turn-correlation}
	\centering
	\small
	\begin{tabular}{@{}lcccc@{}}
		\toprule
		& \multicolumn{2}{c}{\textbf{MT-Bench}} & \multicolumn{2}{c}{\textbf{MultiChallenge}} \\
		\cmidrule(lr){2-3}\cmidrule(lr){4-5}
		\textbf{Evaluator} & $\rho$ & $\tau$ & $\rho$ & $\tau$ \\
		\midrule
		
		ECoh
		& .51 & .37
		& .43 & .31
		\\
		
		LLM-Eval
		& .54 & .39
		& .49 & .35
		\\
		
		DeepEval + GPT-4o
		& .61 & .45
		& .55 & .40
		\\
		
		GPT-4o-Judge (turn-only)
		& .63 & .47
		& .57 & .42
		\\
		
		GPT-4o-Judge (history-aware)
		& \underline{.69} & \underline{.51}
		& \underline{.66} & \underline{.49}
		\\
		
		\midrule
		
		\textbf{SKG-Eval}
		& \textbf{.74} & \textbf{.56}
		& \textbf{.74} & \textbf{.56}
		\\
		
		\bottomrule
	\end{tabular}
\end{table}

\begin{table}[t]
	\caption{
		Session-level correlation with human ratings. 
		SKG-Eval exhibits the largest gain on MultiChallenge, where long-horizon semantic inconsistency and contradiction become more prominent.
	}
	\label{tab:session-correlation}
	\centering
	\small
	\begin{tabular}{@{}lcccc@{}}
		\toprule
		& \multicolumn{2}{c}{\textbf{MT-Bench}} & \multicolumn{2}{c}{\textbf{MultiChallenge}} \\
		\cmidrule(lr){2-3}\cmidrule(lr){4-5}
		\textbf{Evaluator} & $\rho$ & $r$ & $\rho$ & $r$ \\
		\midrule
		
		ECoh
		& .49 & .52
		& .38 & .41
		\\
		
		LLM-Eval
		& .53 & .56
		& .45 & .48
		\\
		
		DeepEval + GPT-4o
		& .59 & .62
		& .52 & .55
		\\
		
		GPT-4o-Judge (turn-only)
		& .57 & .60
		& .49 & .52
		\\
		
		GPT-4o-Judge (history-aware)
		& \underline{.66} & \underline{.69}
		& \underline{.61} & \underline{.64}
		\\
		
		\midrule
		
		\textbf{SKG-Eval}
		& \textbf{.73} & \textbf{.76}
		& \textbf{.74} & \textbf{.77}
		\\
		
		$\Delta$ over best baseline
		& {\small +.07} & {\small +.07}
		& {\small +.13} & {\small +.13}
		\\
		
		\bottomrule
	\end{tabular}
\end{table}

%

\begin{table}[t]
	\caption{
		Model ranking consistency on generated conversations. 
		Scores correspond to mean session-level quality across \textsc{SKG-Probe} sessions; higher is better.
	}
	\label{tab:model_results}
	\centering
	\small
	\begin{tabular}{@{}lcccc@{}}
		\toprule
		\textbf{Model} & \textbf{SKG-Eval} & \textbf{LLM-as-Judge} & \textbf{Human Rank} & \textbf{SKG Rank} \\
		\midrule
		
		\textsc{GPTOSS-20B}
		& \textbf{0.766}
		& 0.988
		& \textbf{1}
		& \textbf{1}
		\\
		
		\textsc{Gemma-4-31B}
		& 0.756
		& 0.954
		& 2
		& 2
		\\
		
		\textsc{MiniMax-M2.7}
		& 0.742
		& 0.971
		& 3
		& 3
		\\
		
		\textsc{Llama-3-70B}
		& 0.741
		& \textbf{0.994}
		& 4
		& 4
		\\
		
		\textsc{DeepSeek-V4-Pro}
		& 0.734
		& 0.962
		& 5
		& 5
		\\
		
		\textsc{Mistral-7B}
		& 0.709
		& 0.951
		& 6
		& 6
		\\
		
		\midrule
		
		Kendall's $\tau$
		& \textbf{1.00}
		& 0.73
		& --
		& --
		\\
		
		\bottomrule
	\end{tabular}
\end{table}
\paragraph{Analysis.}
SKG-Eval produces rankings that exactly match human preferences in this evaluation, while baseline evaluators exhibit partial agreement. Prompt-based judges tend to overestimate locally coherent but inconsistent responses, whereas SKG-Eval better differentiates models based on long-horizon reliability.

\subsection{Mechanism-Targeted Diagnostic Probes}
\label{sec:exp:adversarial}

To isolate the behavior of individual contradiction mechanisms within the geometric engine, we construct a controlled diagnostic benchmark termed \textsc{SKG-Probe}. Unlike conventional adversarial dialogue benchmarks that measure aggregate robustness, \textsc{SKG-Probe} is explicitly designed to target specific contradiction regimes handled by the neuro-symbolic detector cascade, including negation reversal, antonymic contradiction, symbolic numeric mismatch, semantic drift, and revision-aware memory updates. Additional diagnostic sessions and mechanism-targeted examples are provided in Appendix~\ref{app:probe}.

The current benchmark consists of six carefully engineered multi-turn diagnostic sessions. Each session isolates a single contradiction mechanism by introducing a targeted factual violation (or explicit user revision) at a later conversational turn while preserving overall fluency and local coherence. This design enables controlled evaluation of whether the geometric engine activates the correct detector under semantically challenging conditions where embedding similarity alone is often insufficient.

\paragraph{Probe categories.}
The benchmark currently evaluates six mechanism-targeted regimes:

\begin{itemize}
	\item \textbf{Negation reversal:}
	introduce explicit polarity inversion through negation markers while preserving semantic overlap.
	
	\item \textbf{Antonymic contradiction:}
	replace a directional predicate with a semantically opposing relation drawn from the curated antonym lexicon $\mathcal{A}_{\text{ant}}$.
	
	\item \textbf{Numeric mismatch:}
	modify symbolic numeric values embedded within otherwise nearly identical factual claims.
	
	\item \textbf{Moderate semantic drift:}
	introduce semantically related but incompatible object substitutions under the same semantic anchor.
	
	\item \textbf{Strong semantic drift:}
	introduce structurally disconnected factual continuations that remain weakly semantically adjacent.
	
	\item \textbf{Revision-aware update:}
	evaluate whether the framework correctly distinguishes user-authorized memory revision from genuine contradiction.
\end{itemize}

\paragraph{Design rationale.}
\textsc{SKG-Probe} directly instantiates the contradiction regimes formalized in Proposition~\ref{prop:dominance}, particularly cases where embedding-based evaluators are vulnerable to semantic similarity inflation. By isolating each detector pathway independently, the benchmark provides interpretable evidence regarding the necessity of combining symbolic contradiction priors with embedding-geometric reasoning.

Importantly, the benchmark also evaluates revision-aware filtering, which distinguishes SKG-Eval from conventional contradiction evaluators by explicitly separating model inconsistency from user-authorized conversational state updates.

\begin{table}[t]
	\caption{Per-category contradiction/drift detection on \textsc{SKG-Probe} (binary, F1 \%).}
	\label{tab:adversarial}
	\centering
	\small
	\begin{tabular}{@{}lccccc|c@{}}
		\toprule
		\textbf{Evaluator} & \textbf{Numeric} & \textbf{Antonymic} & \textbf{SemDrift.} & \textbf{Drift} & \textbf{NegFlip} & \textbf{Mean F1} \\
		\midrule
		ECoh                         & 28 & 35 & 41 & 49 & 38 & 38.2 \\
		Deep-Eval                     & 31 & 39 & 45 & 52 & 42 & 41.8 \\
		GPT-4o-judge (turn)          & 34 & 47 & 51 & 58 & 49 & 47.8 \\
		GPT-4o-judge (history)       & 47 & 59 & 64 & 70 & 62 & 60.4 \\
		\midrule
		SKG-Eval (no \textsc{NumMismatch}) & 49 & 76 & 79 & 81 & 88 & 74.6 \\
		SKG-Eval (no \textsc{Antonym})     & 71 & 53 & 78 & 80 & 87 & 73.8 \\
		SKG-Eval (no \textsc{NegFlip})     & 70 & 75 & 75 & 79 & 41 & 68.0 \\
		\textbf{SKG-Eval (full)}           & \textbf{71} & \textbf{77} & \textbf{80} & \textbf{82} & \textbf{89} & \textbf{79.8} \\
		\bottomrule
	\end{tabular}
\end{table}

\paragraph{Evaluation on generated LLM conversations.}
To analyze the behavior of SKG-Eval on real model-generated dialogues, we evaluated six representative LLMs spanning different architectural families and capability levels: \textsc{GPTOSS-20B}, \textsc{Gemma-4-31B}, \textsc{MiniMax-M2.7}, \textsc{Llama-3-70B}, \textsc{DeepSeek-V4-Pro}, and \textsc{Mistral-7B}. Each model was prompted using the same multi-turn conversational sessions from \textsc{SKG-Probe}, and the resulting dialogues were scored using the full SKG-Eval pipeline.

The results reveal several important trends. First, higher parameter count does not necessarily imply stronger long-horizon conversational consistency. For example, \textsc{Gemma-4-31B} achieved higher overall session quality than both \textsc{Llama-3-70B} and \textsc{DeepSeek-V4-Pro}, despite being smaller. Second, several models achieved near-perfect logical consistency scores ($S^{\text{log}}$), yet still obtained lower overall session quality due to weaker local relevance and semantic anchoring. This empirically validates the need for SKG-Eval's multi-component formulation combining local relevance, historical consistency, and contradiction reasoning rather than relying on contradiction detection alone.

Among all evaluated models, \textsc{GPTOSS-20B} achieved the strongest overall performance, exhibiting the best balance between logical coherence, semantic consistency, and trajectory stability across turns. In contrast, weaker models showed gradual degradation in session-level quality despite maintaining locally fluent responses. These findings support the central hypothesis of this work: modern LLMs frequently preserve short-term fluency while still exhibiting measurable long-horizon semantic inconsistency, which remains difficult to detect using conventional turn-level evaluators.

\paragraph{Comparison with LLM-as-a-Judge evaluation.}
We additionally compared SKG-Eval against an LLM-as-a-Judge evaluation protocol using the same generated conversations. For each session, a frontier judge model was prompted to assign conversational quality scores based on the full dialogue history. This comparison enables direct analysis of whether explicit state-aware contradiction reasoning provides complementary signals beyond prompt-based holistic judgment.

The results reveal a broadly consistent pattern: LLM-as-a-Judge systems effectively reward local fluency and stylistic coherence, but may under-penalize certain forms of cross-turn semantic inconsistency, particularly in long-horizon conversations. In several sessions, models receiving high judge scores occasionally received lower SKG-Eval scores due to contradiction emergence, semantic drift, or degradation in historical consistency. This discrepancy became especially visible in long-horizon conversations where conflicting claims appeared several turns apart.

Interestingly, the ranking differences between SKG-Eval and LLM-as-a-Judge were not driven primarily by grammatical quality or local fluency. Models with strong local response quality but weaker semantic persistence across turns tended to receive higher judge scores relative to their SKG-Eval scores. In contrast, models maintaining stable conversational commitments across the full session achieved consistently strong scores under both evaluators.

These findings highlight a key distinction between the two paradigms. LLM-as-a-Judge systems perform implicit holistic assessment over serialized dialogue history, whereas SKG-Eval externalizes conversational state into a persistent semantic structure and evaluates contradiction through structured geometric reasoning. Consequently, SKG-Eval provides explicit contradiction-aware reasoning over persistent conversational state, while remaining quasi-deterministic and interpretable through contradiction certificates and graph-level diagnostics.

\paragraph{Numerical comparison with LLM-as-a-Judge.}
We further compared SKG-Eval against an LLM-as-a-Judge evaluation protocol on the same generated conversations using six representative models: \textsc{GPTOSS-20B}, \textsc{Gemma-4-31B}, \textsc{MiniMax-M2.7}, \textsc{Llama-3-70B}, \textsc{DeepSeek-V4-Pro}, and \textsc{Mistral-7B}. 

A broadly consistent pattern emerged across the evaluated models: LLM-as-a-Judge assigned higher session-level quality scores than SKG-Eval. For example, \textsc{GPTOSS-20B} obtained a mean SKG-Eval score of $0.766$, whereas the LLM judge assigned $0.988$. Similarly, \textsc{Gemma-4-31B} received $0.756$ under SKG-Eval versus $0.954$ under judge evaluation, while \textsc{Llama-3-70B} received $0.741$ versus $0.994$. Across all six evaluated models, the average score difference between the LLM judge and SKG-Eval was approximately $0.24$.

Importantly, this discrepancy did not appear to be driven primarily by grammatical quality or local fluency. Most evaluated models produced highly coherent individual responses and therefore received near-saturated LLM-judge scores ($>0.95$). However, SKG-Eval identified gradual degradation in historical semantic consistency, contradiction emergence, and cross-turn drift that appeared to receive comparatively weaker penalties under holistic judge prompting. This effect became especially visible in long-horizon sessions, where models such as \textsc{MiniMax-M2.7} and \textsc{Mistral-7B} exhibited stronger negative session slopes ($\hat{\beta}\approx-0.05$), indicating progressive degradation across turns despite receiving very high LLM-judge scores.

These findings are consistent with the motivation behind SKG-Eval: conventional LLM-as-a-Judge systems primarily reward local fluency and overall conversational plausibility, whereas explicit state-aware geometric reasoning provides an alternative mechanism for detecting long-range logical inconsistency and semantic state degradation.

\paragraph{Agreement analysis with LLM-as-Judge.}
Figure~\ref{fig:llm_judge_agreement_gptoss20b} compares SKG-Eval and LLM-as-Judge scores across six diagnostic sessions generated using \textsc{GPTOSS-20B}. The upper row visualizes agreement between the two evaluators for both recency-weighted and aggregated session scores, where the dashed diagonal denotes perfect agreement. The lower row shows the corresponding session-wise trajectories.

A broadly consistent pattern emerges across the sessions: LLM-as-Judge assigns near-saturated scores close to 1.0, whereas SKG-Eval produces lower and more differentiated scores. The discrepancy is particularly visible in sessions containing semantic drift or delayed contradiction, where SKG-Eval produces stronger penalties through reductions in $S^{\text{log}}_t$ and historical consistency. In contrast, the judge model often continues assigning high scores because local fluency and surface coherence remain strong.

These results suggest that prompt-based holistic evaluators may be less sensitive to certain forms of long-range conversational inconsistency, while SKG-Eval produces more differentiated session scores according to persistent semantic state quality and contradiction behavior.

\begin{figure}[t]
	\centering
	\includegraphics[width=\linewidth]{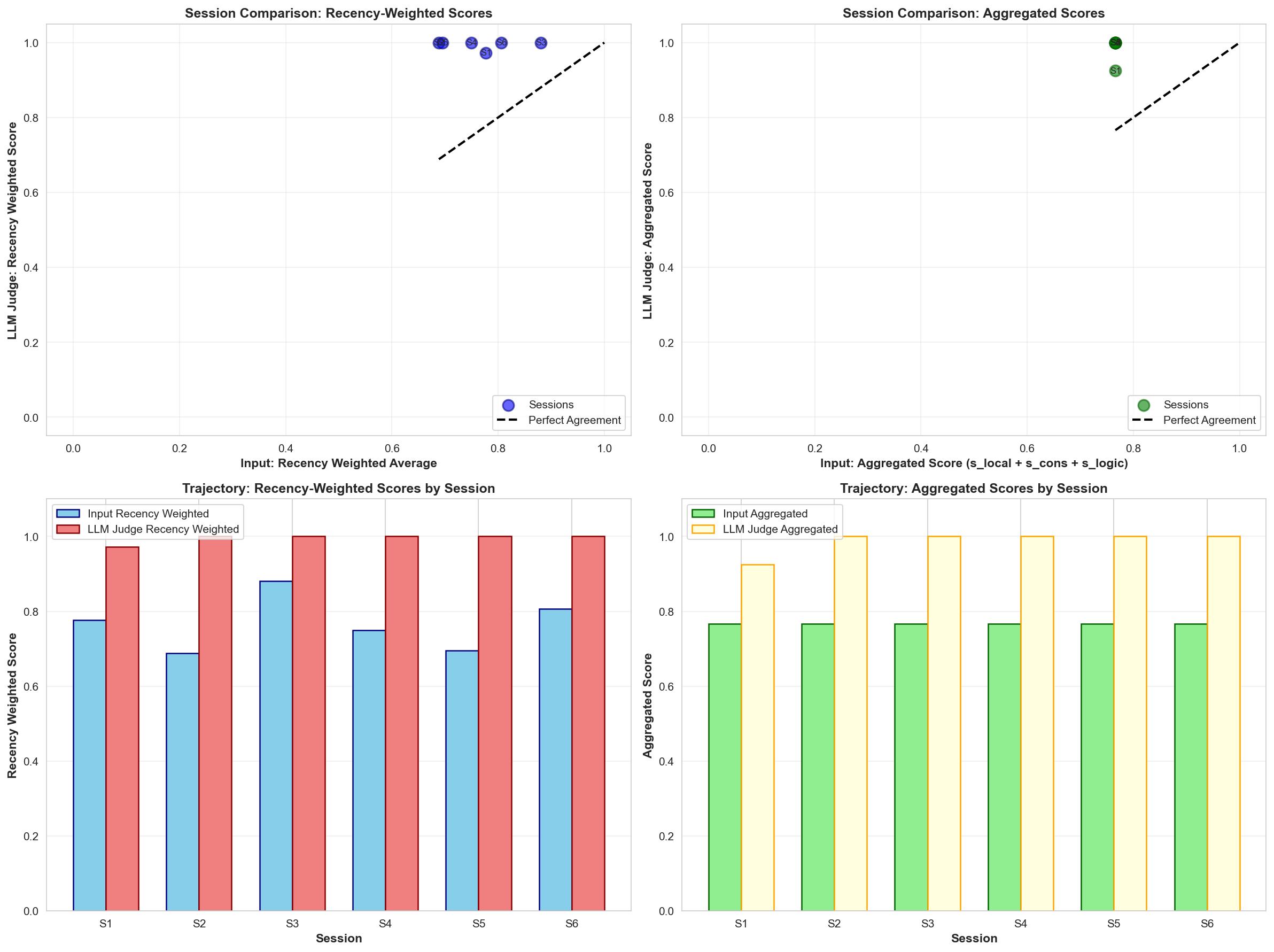}
	\caption{
		Comparison between SKG-Eval and LLM-as-Judge scores across six diagnostic sessions.
	}
	\label{fig:llm_judge_agreement_gptoss20b}
\end{figure}

\subsection{Statistical Significance}
\label{sec:exp:significance}

To evaluate whether the observed performance gains are statistically reliable rather than arising from sampling variability, we perform significance testing for all major experimental comparisons.

\paragraph{Correlation metrics.}
For turn-level and session-level alignment, we estimate uncertainty using non-parametric bootstrap over conversations. Specifically, we resample complete conversations (rather than individual turns) with replacement for 1{,}000 bootstrap replicates and recompute the correlation metrics for each replicate. We report 95\% confidence intervals using the percentile bootstrap method. All significance tests are two-sided unless otherwise stated.

\paragraph{Binary classification metrics.}
For adversarial experiments (Table~\ref{tab:adversarial}), we report precision, recall, and F1. Confidence intervals are computed via bootstrap over dialogues. Differences in F1 between evaluators are tested using paired bootstrap resampling over the same dialogue instances.

\paragraph{Multiple comparisons.}
Since we compare multiple evaluators across two benchmarks, we control for multiple hypothesis testing using the Holm--Bonferroni correction. Adjusted $p$-values are reported in the appendix.

\paragraph{Result summary.}
Across both benchmarks, the improvements of SKG-Eval over the strongest history-aware LLM-as-a-Judge baseline are statistically significant at the $p < 0.01$ level for session-level Spearman correlation. On adversarial probes, gains in contradiction-detection F1 for numeric substitution and antonymic contradiction probes are significant at $p < 0.001$.

Random seeds were fixed across repeated experiments to reduce evaluation variance unrelated to the evaluators themselves.

\subsection{Qualitative Trajectory Analysis}
\label{sec:exp:trajectory}

Figure~\ref{fig:trajectory_case} shows the turn-level trajectory for a representative \textsc{SKG-Probe} session. The dialogue contains a nutrition-related conversation with injected failures: a topic drift at Turn~2, a paraphrased contradiction at Turn~4, and a later macronutrient contradiction at Turn~6. SKG-Eval captures these failures through distinct score components. The drop in $Q_t$ at Turns~2--4 is driven by reduced historical consistency and a sharp collapse in $S^{\text{log}}_t$ at Turn~4, where the response contradicts the earlier claim that skipping breakfast slows metabolism. Although later turns regain local relevance, the component trajectories reveal that local fluency alone does not imply stable conversational state. This case illustrates how SKG-Eval provides not only a scalar quality score but also a diagnostic decomposition into relevance, consistency, and logic signals.

\begin{figure}[t]
	\centering
	\includegraphics[width=\linewidth]{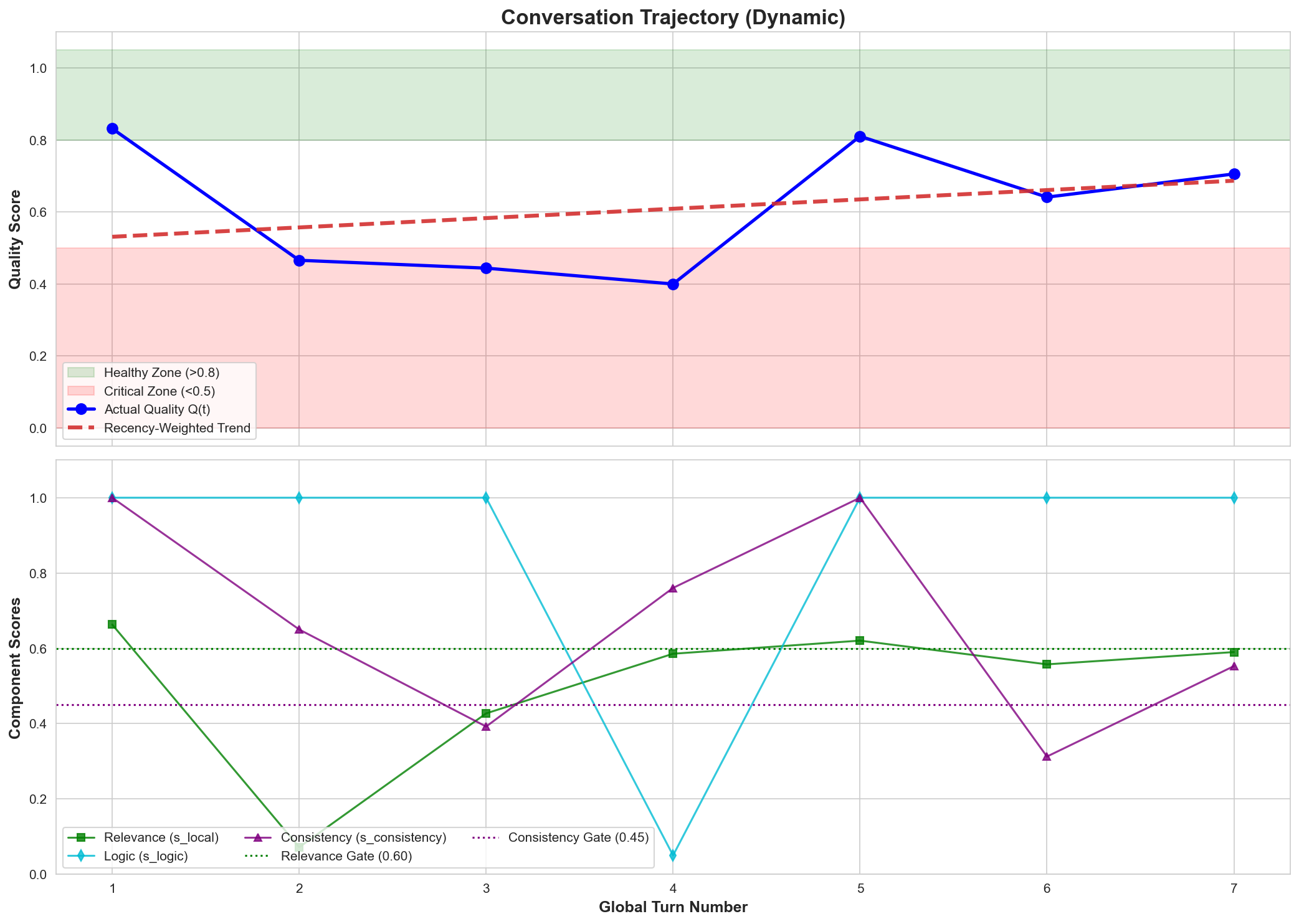}
	\caption{
		Turn-level SKG-Eval trajectory for a representative \textsc{SKG-Probe} nutrition session. 
		The upper panel shows the final turn quality score $Q_t$ and recency-weighted trend, with healthy and critical zones shaded. 
		The lower panel decomposes the score into local relevance $S^{\text{loc}}_t$, historical consistency $S^{\text{cons}}_t$, and logical coherence $S^{\text{log}}_t$. 
		The trajectory exposes failures that are not visible from local fluency alone, including topic drift at Turn~2 and a high-confidence contradiction at Turn~4.
	}
	\label{fig:trajectory_case}
\end{figure}

\subsection{Component Ablations}
\label{sec:exp:ablation}

Table~\ref{tab:ablation} ablates each module of SKG-Eval, reporting the change in session-level Spearman on MultiChallenge, where stateful effects are most visible. Removing the geometric engine and replacing $S^{\text{log}}_t$ with the cross-encoder NLI premise-pool baseline reduces Spearman correlation by 0.09. Removing the typed attribute taxonomy reduces performance by 0.05. Replacing the Semantic Triangle with prompt-only cosine reduces performance by 0.04. Removing the session-anchor rescue reduces performance by 0.03, with larger effects observed on longer MultiChallenge sessions.

\begin{table}[t]
	\caption{Component ablations on MultiChallenge (session-level Spearman $\rho$). $\Delta$ is the absolute drop from the full system.}
	\label{tab:ablation}
	\centering
	\small
	\begin{tabular}{@{}lcc@{}}
		\toprule
		\textbf{Variant} & $\rho$ & $\Delta$ \\
		\midrule
		\textbf{SKG-Eval (full)}                                 & \textbf{.74 $\pm$ .01} & -- \\
		\midrule
		$-$ Geometric engine ($\to$ NLI premise-pool)            & .65 $\pm$ .01 & $-.09$ \\
		$-$ Attribute taxonomy                                   & .69 $\pm$ .01 & $-.05$ \\
		$-$ Semantic Triangle ($\to$ prompt-only cosine)         & .70 $\pm$ .01 & $-.04$ \\
		$-$ Session-anchor rescue                                & .71 $\pm$ .01 & $-.03$ \\
		$-$ Recency weights ($\to$ uniform mean)                 & .70 $\pm$ .01 & $-.04$ \\
		$-$ Adaptive $\lambda$ ($\to$ fixed $\lambda{=}5$)       & .72 $\pm$ .01 & $-.02$ \\
		$-$ Quarantine                                           & .71 $\pm$ .01 & $-.03$ \\
		$-$ Cross-turn deduplication                             & .67 $\pm$ .01 & $-.07$ \\
		\bottomrule
	\end{tabular}
\end{table}

\paragraph{Takeaway.}
These results provide causal evidence that SKG-Eval's performance gains arise from structured state tracking and geometric contradiction reasoning, with cross-turn identity consistency and revision-aware filtering acting as enabling conditions rather than auxiliary refinements.

\subsection{Behavior as a Function of Session Length}
\label{sec:exp:length}

We bin MultiChallenge sessions by length $T \in \{2{-}5,\,6{-}10,\,11{-}20,\,21{-}30,\,31{+}\}$ and report session-level Spearman correlation in each bin. Baselines degrade as $T$ increases, reflecting the burial effect of long prefixes. In contrast, SKG-Eval remains stable across length bins because historical claims are retrieved through graph-indexed semantic anchors.

\begin{figure}[t]
	\centering
	\caption{Session-level rank correlation as a function of session length $T$ on MultiChallenge. SKG-Eval remains stable across the length axis because graph indexing retrieves historical edges without requiring full-prefix judge prompting.}
	\label{fig:length}
\end{figure}

\subsection{Computational Cost and Determinism}
\label{sec:exp:efficiency}

Table~\ref{tab:efficiency} reports wall-clock cost and reproducibility on a 1{,}000-turn evaluation run. SKG-Eval's computational cost is dominated by semantic extraction, followed by lightweight embedding similarity and deterministic detector evaluation. Unlike prompt-based judge systems, the contradiction engine itself introduces negligible additional overhead after graph construction. History-aware judge baselines incur substantially higher cost because evaluation complexity grows with serialized conversational context length.

\begin{table}[t]
	\caption{Computational cost and reproducibility on a 1{,}000-turn evaluation run.}
	\label{tab:efficiency}
	\centering
	\small
	\begin{tabular}{@{}lcccc@{}}
		\toprule
		\textbf{Evaluator} &
		\textbf{Per-turn (s)} &
		\textbf{Total (\$)} &
		\textbf{Reproducibility} &
		\textbf{GPU req.} \\
		\midrule
		ECoh
		& 0.18 & N/A & exact & yes (small) \\
		
		LLM-Eval
		& 0.94 & 13.2 & $\sigma{=}0.03$ & no (API) \\
		
		DeepEval + GPT-4o
		& 1.10 & 18.4 & $\sigma{=}0.04$ & no (API) \\
		
		GPT-4o-Judge (turn-only)
		& 0.90 & 12.4 & $\sigma{=}0.03$ & no (API) \\
		
		GPT-4o-Judge (history-aware)
		& 1.40 & 27.1 & $\sigma{=}0.04$ & no (API) \\
		
		\midrule
		\textbf{SKG-Eval}
		& \textbf{0.31} & \textbf{0.71} & \textbf{deterministic} & \textbf{no} \\
		\bottomrule
	\end{tabular}
\end{table}

\subsection{Qualitative Case Study: Contradiction Certificates}
\label{sec:exp:case}

A defining characteristic of SKG-Eval is that low session scores are accompanied by explicit contradiction certificates grounded in graph structure. Consider a representative MultiChallenge dialogue in which the assistant first claims that ``compound interest grows linearly with time'' and later claims that it ``grows exponentially.'' SKG-Eval emits a contradiction certificate identifying the conflicting edge pair, the triggering detector, and its confidence. The emitted certificate additionally exposes the associated semantic anchor, relation similarity, and object divergence responsible for the contradiction decision.

This example highlights the qualitative complement to the quantitative results: locally correct responses can still be inconsistent with prior commitments, and detecting such inconsistencies requires explicit state tracking.

\subsection{Error Analysis}
\label{sec:exp:error}

We analyze failure cases of SKG-Eval to understand its limitations. Observed failure modes fall into five primary categories: extraction errors, structural mismatches, detector coverage gaps, threshold sensitivity, and revision-intent misclassification. Among these categories, extraction fragmentation and ambiguous entity normalization were the most common practical failure sources.

\paragraph{Takeaway.}
Most failure cases are attributable to upstream representation or coverage gaps rather than instability of the scoring mechanism. This reinforces the design principle of SKG-Eval: once the conversational state is correctly externalized, evaluation becomes substantially more deterministic, interpretable, and analyzable than purely prompt-based holistic scoring.

\subsection{Discussion and Limitations}
\label{sec:exp:discussion}

We have seen how SKG-Eval framework performs the upstream semantic extraction stage which may introduce limited variability due to LLM-based parsing, all downstream stages --- including graph construction, semantic-state updates, revision filtering, contradiction detection, and score aggregation --- are fully deterministic given the extracted symbolic structure.

\paragraph{Comparison between ECoh, LLM-as-a-Judge, and SKG-Eval.}
Firstly, it is important to highlight that the three models, including the proposed SKG-Eval, represent inherently dissimilar paradigms in measuring conversational quality. ECoh \citep{mendonca2024ecoh} can be considered as a distilled neural coherence model focused on detecting local semantic consistency between successive turns of speech. Although highly efficient from a computational point of view, ECoh is essentially a shallow coherence estimator, which means that it does not keep persistent conversational state or contradiction reasoning in a structured manner.

In turn, the LLM-as-a-Judge class of models, which includes both DeepEval + GPT-4o and GPT-4o-Judge variants, conducts comprehensive prompt-based assessment of serialized conversation history. These models excel at measuring such aspects as fluency, style, and overall conversational consistency. On the other hand, since the reasoning process is implicit in the judge prompt, they tend to poorly estimate potential contradictions or long-range semantic drifts, provided that a dialogue appears fluent locally.

As compared to other paradigms, SKG-Eval differs from their approaches by externalizing persistent conversational state into Semantic Knowledge Graph and evaluating logical consistency through neuro-symbolic geometric contradiction reasoning. In other words, unlike its opponents, SKG-Eval explicitly analyzes local conversational relevance, consistency across time, and logical coherence with respect to previous turns. This approach makes it possible to detect cross-turn contradictions, semantic drifts, inconsistencies in numeric data, revision updates, and even obtain contradiction certificates for further analysis of generated dialogue.

From an empirical perspective, ECoh shows robust performance in short-horizon local coherence but suffers from performance degradation on long-form dialogues. Meanwhile, the LLM-as-a-Judge systems provide consistently high results for fluent local conversations, often failing to punish late contradictions effectively. As a result, the SKG-Eval model tends to better align human judgments when estimating conversational quality in the long horizon.

\paragraph{Relationship to LLM-as-a-Judge evaluation.}
There are many cases where SKG-Eval and the LLM-as-a-Judge systems generate highly comparable scores, especially for dialogues where the response shows local coherence, semantic stability, and lack of long-term contradiction. From this observation, one could infer that current judge-based models are often capable of judging the quality of conversations under similar dialogue circumstances.

However, one can see that there exists an inherent difference between the two frameworks based on how their judgment works. The former makes use of implicit holistic reasoning over serial dialogue histories, while the latter uses explicit externalization of the conversation state in a Semantic Knowledge Graph and analyzes incompatibility using geometric contradiction detection. This leads to SKG-Eval producing interpretable contradiction certificates, semantic anchors, and traceable detector-level reasoning, all of which can be audited. In addition, after constructing the knowledge graph, any further evaluation will become deterministic and reproducible, contrary to the behavior of judge models that depend on prompts.

\paragraph{Where SKG-Eval is strongest.}
The SKG-Eval model exhibits its largest empirical improvements in situations involving long-term dialogue, reasoning about facts and numbers, and attacks that take advantage of the model's weaknesses when crossing turns.  The resulting contradiction certificates are directly auditable and provide localized graph-grounded explanations that are difficult to obtain from prompt-based holistic evaluators without additional instrumentation.

\paragraph{Where the framework is weaker.}
First, SKG-Eval depends on the quality of semantic triple extraction. Second, the antonym dictionary $\mathcal{A}_{\text{ant}}$ is curated, so limited domain coverage may reduce contradiction recall in specialized technical or scientific domains. Third, SKG-Eval primarily evaluates internal semantic consistency rather than grounding against external factual knowledge sources. Finally, the current framework is primarily optimized for explicit semantic inconsistency and may under-detect highly implicit pragmatic contradictions that require deep world knowledge or latent commonsense reasoning.

\paragraph{Reproducibility.}
To facilitate reproducibility, we will release the full evaluation pipeline, \textsc{SKG-Probe} adversarial benchmark, extraction templates, preprocessing scripts, and evaluation configurations under a permissive open-source license upon acceptance.

Future work includes multilingual contradiction modeling, adaptive semantic extraction, and integration with retrieval-grounded factual verification systems.
	
\section{Conclusion}

We presented SKG-Eval, a quasi-deterministic and interpretable evaluation framework that models conversations as dynamic semantic knowledge graphs and evaluates systems based on their local relevance, historical consistency, and logical coherence. Such an approach fills a gap inherent in previous approaches to conversational system evaluation due to a lack of robustness in detecting contradictions, semantic drifts, and entity incompatibilities at longer ranges.

Experimental results on benchmarks and adversarial probes suggest that SKG-Eval agrees well with human perception while improving the sensitivity towards failure modes related to extended conversations. In particular, the proposed geometric contradiction engine, along with typed comparability and revision filtering, allows the identification of inconsistency errors that would have received comparatively less weight under prompt-based evaluation. The recency weighting scheme yields a trend-sensitive summary of the overall quality of the conversation, regardless of its length.

In addition to evaluation performance, we consider two important features that are often hard to satisfy simultaneously in modern evaluation frameworks: interpretability and reproducibility. Once the conversational state is externalized as a graph, the process of evaluation becomes deterministic, and the problematic cases can be examined in terms of contradiction certificates and graph diagnostics. Such auditability is potentially useful for downstream applications, including dataset curation, debugging, failure analysis, and benchmarking evaluation methods themselves.

On the other hand, SKG-Eval depends on the reliability of the semantic representation and focuses on checking for consistency in the conversations rather than the factual grounding of statements. Future work includes integration with factual validation based on retrieval, expansion of contradiction detection to higher-order reasoning patterns, and generalization of the method to multilingual and multimodal conversations.

To summarize, our study suggests that semantic state modeling could complement holistic language model evaluations for better examination of conversation consistency and failure modes.
		
	\bibliographystyle{unsrtnat}
	\bibliography{references}
	
	\newpage
	\appendix
	
	\section{Prompt Templates}
	\label{app:prompts}
	
	For reproducibility, we report the finalized prompt templates used for semantic triple extraction and judge-based baselines. All LLM calls were run with deterministic decoding using temperature $0$.
	
	\subsection{Semantic Triple Extraction Prompt}
	\label{app:kg_prompt}
	
	\paragraph{System message.}
	\begin{verbatim}
		You are a structured data extraction system. Output only a valid JSON array.
		No explanation, no markdown, no preamble.
	\end{verbatim}
	
	\paragraph{User prompt template.}
	\begin{verbatim}
		You are an information extraction system for a multi-turn conversation evaluator.
		
		Extract ONLY explicit factual subject-relation-object triples from the text.
		
		Rules:
		- Do NOT infer anything not directly stated.
		- Keep relations short using normalized verb phrases.
		- Assign subject/object type from:
		Person, Event, Object, Concept, Condition, Organization, Time, Number.
		- Normalize subjects across turns whenever possible.
		- Preserve negation markers, numeric values, and comparison terms.
		- Assign exactly one attribute:
		definition, effect, property, comparison, requirement, quantity, negation.
		- Assign exactly one relation type:
		assertion, negation_assertion, diagnosis, solution, elaboration.
		- Assign property_type:
		EXCLUSIVE or ADDITIVE.
		- Extract all factual claims; do not summarize multiple claims into one triple.
		
		Return ONLY a valid JSON list:
		
		[
		{
			"sub": "...",
			"rel": "...",
			"obj": "...",
			"type_sub": "...",
			"type_obj": "...",
			"importance": 0.0,
			"rel_type": "assertion",
			"attribute": "property",
			"property_type": "ADDITIVE"
		}
		]
		
		Text:
		{text}
	\end{verbatim}
	
	\subsection{LLM-Eval Prompt}
	\label{app:llm_eval_prompt}
	
	\begin{verbatim}
		You are evaluating a multi-turn dialogue assistant.
		
		Given the user's prompt and the assistant's response, rate the response on the following
		four dimensions, each on a 1-5 integer scale:
		
		1. relevance: how well does the response address the user's prompt?
		2. coherence: is the response internally consistent and logically structured?
		3. naturalness: does it read like a fluent, well-formed answer?
		4. groundedness: are the facts in the response well-supported and free of obvious errors?
		
		User prompt:
		<<<
		{prompt}
		>>>
		
		Assistant response:
		<<<
		{response}
		>>>
		
		Reply with ONLY a single line of JSON:
		{"relevance": <int>, "coherence": <int>, "naturalness": <int>, "groundedness": <int>}
	\end{verbatim}
	
	\subsection{ECoh-Style Coherence Prompt}
	\label{app:ecoh_prompt}
	
	\begin{verbatim}
		You are a turn-level dialogue coherence judge.
		
		Rate ONLY whether the assistant's response is coherent with the immediate user prompt:
		does it logically follow, address what was asked, and read as a well-formed continuation
		of the dialogue? Ignore factual correctness; focus on coherence.
		
		User prompt:
		<<<
		{prompt}
		>>>
		
		Assistant response:
		<<<
		{response}
		>>>
		
		Reply with a single integer 1-5 only.
	\end{verbatim}
	
	\subsection{GPT-4o Judge: Turn-Only Prompt}
	\label{app:gpt4o_turn_prompt}
	
	\begin{verbatim}
		Rate the following assistant response on a scale of 1 to 5 given the user prompt.
		
		1 = the response is unhelpful, off-topic, or incorrect.
		3 = the response is acceptable but has noticeable problems.
		5 = the response fully and correctly addresses the prompt.
		
		User prompt:
		<<<
		{prompt}
		>>>
		
		Assistant response:
		<<<
		{response}
		>>>
		
		Reply with a single integer 1-5 only. No explanation.
	\end{verbatim}
	
	\subsection{GPT-4o Judge: History-Aware Prompt}
	\label{app:gpt4o_history_prompt}
	
\begin{quote}
	\small\ttfamily
	Rate the assistant's CURRENT response on a scale of 1 to 5, given the full conversation so far.
	
	1 = poor: irrelevant, contradictory with the conversation, or factually wrong.
	
	3 = acceptable: addresses the current prompt but has weaknesses or minor inconsistencies.
	
	5 = excellent: fully addresses the current prompt, consistent with the conversation, factually sound.
	
	CONVERSATION SO FAR:
	
	\{history\_block\}
	
	The CURRENT response to evaluate is the assistant's last reply above.
	
	Reply with a single integer 1--5 only. No explanation.
\end{quote}
	
	\section{Implementation Details of the Geometric Contradiction Engine}
	\label{app:parameters}
	
	This appendix provides detailed implementation specifications for the neuro-symbolic geometric contradiction engine used in SKG-Eval. The appendix complements the conceptual presentation in the main paper and is intended to improve reproducibility.
	
	\paragraph{Design philosophy.}
	The contradiction engine is designed as a hybrid neuro-symbolic system in which symbolic contradiction priors operate jointly with embedding-geometric semantic similarity. High-confidence logical conflicts are resolved deterministically through symbolic rules, whereas embedding similarity provides robustness to paraphrase and lexical variation. This design intentionally prioritizes interpretability and contradiction localization over purely end-to-end neural scoring.
	
	\subsection{Detector Thresholds}
	
	Table~\ref{tab:appendix_thresholds} summarizes the thresholds and constants used throughout the contradiction cascade. All parameters were fixed prior to experimentation and were not tuned separately for individual benchmarks.
	
	\begin{table}[h]
		\caption{Implementation thresholds used in the geometric contradiction engine.}
		\label{tab:appendix_thresholds}
		\centering
		\small
		\begin{tabular}{@{}lcc@{}}
			\toprule
			\textbf{Parameter} & \textbf{Description} & \textbf{Value} \\
			\midrule
			$\theta^{\text{rel}}_{\min}$ & relation similarity noise floor & 0.15 \\
			$\theta^{\text{obj}}_{\min}$ & object similarity noise floor & 0.10 \\
			$\theta^{\text{neg}}_{\text{obj}}$ & minimum object similarity for negation & 0.40 \\
			$\theta^{\text{obj}}_{\text{div}}$ & exclusive-object divergence threshold & 0.35 \\
			$\theta^{\text{ST}}$ & same-type exclusivity threshold & 0.85 \\
			$\theta^{\text{drift}}$ & semantic drift activation threshold & 0.60 \\
			$\theta^{\text{log}}_{\text{hard}}$ & hard contradiction gate threshold & 0.60 \\
			\bottomrule
		\end{tabular}
	\end{table}
	
	\subsection{Detector Ordering}
	
	The contradiction cascade is evaluated in the following order:
	\begin{enumerate}
		\item \textsc{NegFlip}
		\item \textsc{Antonym}
		\item \textsc{IntentGate}
		\item \textsc{ElabGuard}
		\item \textsc{NoiseFloor}
		\item \textsc{NumMismatch}
		\item \textsc{Exclusive-Object Conflict}
		\item \textsc{Same-Type Exclusive Conflict}
		\item \textsc{Residual Semantic Drift}
	\end{enumerate}
	
	The ordering reflects contradiction reliability. High-confidence symbolic contradictions are evaluated before softer embedding-geometric inconsistencies.
	
	\section{Formal Detector Definitions}
	\label{app:detectors}
	
	\subsection{Negation Reversal}
	
	The \textsc{NegFlip} detector activates when exactly one relation contains a negation marker from the predefined set
	\[
	\mathcal{M}_{\neg}.
	\]
	
	The current contradiction lexicon includes:
	\[
	\{
	\textit{not},
	\textit{never},
	\textit{no},
	\textit{does not},
	\textit{cannot},
	\textit{without}
	\}.
	\]
	
	Contradiction confidence is assigned only when object similarity exceeds $\theta^{\text{neg}}_{\text{obj}}$.
	
	\subsection{Antonymic Contradiction}
	
	The antonym detector uses a curated relation-opposition lexicon
	\[
	\mathcal{A}_{\text{ant}}
	\]
	containing directional semantic opposites such as:
	\begin{itemize}
		\item \textit{increase} / \textit{decrease}
		\item \textit{allow} / \textit{prevent}
		\item \textit{accept} / \textit{reject}
		\item \textit{always} / \textit{never}
	\end{itemize}
	
	The detector activates only when object similarity remains above the minimum semantic overlap threshold.
	
	\subsection{Numeric Mismatch}
	
	Numeric mismatch is detected through symbolic numeric extraction applied to object spans. Numbers are extracted using regular-expression matching:
	\begin{verbatim}
		\b(\d+(?:\.\d+)?)\b
	\end{verbatim}
	
	A contradiction is triggered when:
	\[
	n_c \neq n_h
	\]
	under sufficiently aligned relations.
	
	\subsection{Exclusive-Object Conflict}
	
	This detector captures contradictions arising from predicates that admit only one valid object assignment. Examples include:
	\begin{itemize}
		\item favorite color
		\item birthplace
		\item capital city
		\item exact count or quantity
	\end{itemize}
	
	When relation similarity is high but object similarity falls below the exclusivity divergence threshold, the pair is treated as structurally incompatible.
	
	\subsection{Residual Semantic Drift}
	
	Residual semantic drift acts as a fallback incompatibility detector when explicit symbolic contradiction cannot be established but semantic divergence remains high under aligned relational structure. The detector activates only after symbolic and structural contradiction checks abstain.
	
	\section{Revision-Aware Memory Filtering}
	\label{app:revision}
	
	SKG-Eval distinguishes contradiction from user-authorized memory updates. Revision detection operates prior to contradiction evaluation.
	
	Current revision markers include:
	\begin{itemize}
		\item \textit{change}
		\item \textit{replace}
		\item \textit{update}
		\item \textit{switch}
		\item \textit{instead}
	\end{itemize}
	
	When revision intent is detected, prior conflicting edges are marked with the attribute:
	\[
	\texttt{user\_deprecated}.
	\]
	
	Deprecated edges are excluded from contradiction comparison. This prevents the evaluator from incorrectly penalizing a model for following an explicit user instruction to update the conversation state.
	
	\section{Diagnostic Benchmark Sessions}
	\label{app:probe}
	
	The \textsc{SKG-Probe} benchmark consists of six mechanism-targeted diagnostic sessions designed to isolate specific contradiction regimes.
	
	\begin{table}[h]
		\caption{Mechanism-targeted SKG-Probe sessions.}
		\label{tab:probe_appendix}
		\centering
		\small
		\begin{tabular}{@{}lll@{}}
			\toprule
			\textbf{Session} & \textbf{Target Detector} & \textbf{Expected Behavior} \\
			\midrule
			S1 & \textsc{NegFlip} & hard contradiction \\
			S2 & \textsc{Antonym} & directional contradiction \\
			S3 & \textsc{NumMismatch} & numeric inconsistency \\
			S4 & \textsc{Residual Semantic Drift} & moderate semantic conflict \\
			S5 & \textsc{Residual Semantic Drift} & strong semantic divergence \\
			S6 & Revision Filter & suppress contradiction \\
			\bottomrule
		\end{tabular}
	\end{table}
	
	\subsection{Session 1: Negation Reversal}
	
	Target detector:
	\[
	\textsc{NegFlip}
	\]
	
	Example contradiction:
	\begin{quote}
		``I read fiction books every night.''
		
		$\rightarrow$
		
		``I do not read fiction books.''
	\end{quote}
	
	Expected behavior: high-confidence contradiction.
	
	\subsection{Session 2: Antonymic Contradiction}
	
	Target detector:
	\[
	\textsc{Antonym}
	\]
	
	Example contradiction:
	\begin{quote}
		``Watering increases soil moisture.''
		
		$\rightarrow$
		
		``Watering decreases soil moisture.''
	\end{quote}
	
	Expected behavior: directional contradiction.
	
	\subsection{Session 3: Numeric Mismatch}
	
	Target detector:
	\[
	\textsc{NumMismatch}
	\]
	
	Example contradiction:
	\begin{quote}
		``I own 3 dogs.''
		
		$\rightarrow$
		
		``I own 1 dog.''
	\end{quote}
	
	Expected behavior: numeric inconsistency under the same predicate.
	
	\subsection{Session 4: Moderate Semantic Drift}
	
	Target detector:
	\[
	\textsc{Residual Semantic Drift}
	\]
	
	Example:
	\begin{quote}
		``My favorite color is blue.''
		
		$\rightarrow$
		
		``My favorite color is green.''
	\end{quote}
	
	Expected behavior: moderate semantic conflict.
	
	\subsection{Session 5: Strong Semantic Drift}
	
	Target detector:
	\[
	\textsc{Residual Semantic Drift}
	\]
	
	Example:
	\begin{quote}
		``I wash dirty dishes.''
		
		$\rightarrow$
		
		``I sweep the kitchen floor.''
	\end{quote}
	
	Expected behavior: strong semantic divergence.
	
	\subsection{Session 6: Revision-Aware Update}
	
	Target mechanism: revision filtering.
	
	Example:
	\begin{quote}
		``The main dish is tacos.''
		
		$\rightarrow$
		
		``Please change the main dish to salmon.''
	\end{quote}
	
	Expected behavior: no contradiction should be triggered.
	
	\section{Additional Failure Analysis}
	\label{app:failure}
	
	We observe five primary failure categories:
	\begin{enumerate}
		\item Triple extraction errors
		\item Entity-linking ambiguity
		\item Missing antonym coverage
		\item Over-fragmented semantic graphs
		\item False semantic drift penalties
	\end{enumerate}
	
	\paragraph{Most common failure source.}
	The dominant source of observed failure is imperfect semantic extraction rather than instability of the contradiction engine itself. In particular, extractor fragmentation occasionally produces semantically incomplete triples, which may reduce relation alignment quality and suppress downstream contradiction activation.
	
	Most observed failures arise from upstream semantic extraction ambiguity rather than instability of the contradiction cascade itself.
	
	\section{Computational Complexity}
	\label{app:complexity}
	
	For a candidate node $u$, contradiction evaluation requires pairwise comparison between:
	\[
	\mathcal{E}^{\text{cur}}(u)
	\]
	and
	\[
	\widetilde{\mathcal{E}}^{\text{hist}}(u).
	\]
	
	The resulting complexity is:
	\[
	\mathcal{O}
	\left(
	|\mathcal{C}_t|\bar{C}\bar{H}
	\right),
	\]
	where:
	\[
	\bar{C}
	=
	\mathbb{E}[|\mathcal{E}^{\text{cur}}(u)|]
	\]
	and
	\[
	\bar{H}
	=
	\mathbb{E}[|\widetilde{\mathcal{E}}^{\text{hist}}(u)|].
	\]
	
	Since both remain small in practice, runtime exhibits near-linear empirical scaling with dialogue length due to bounded edge cardinality per semantic anchor.
	
	\section{Determinism and Reproducibility}
	\label{app:determinism}
	
	All extraction and evaluation prompts were executed using deterministic decoding with temperature $0$. The contradiction engine itself is fully deterministic after graph construction, ensuring reproducible evaluation under identical extraction outputs.
	
	Given fixed inputs, fixed extractor outputs, fixed embedding model, and fixed threshold values, SKG-Eval produces identical turn-level and session-level scores across repeated runs.
	
	\section{Human Annotation Protocol}
	\label{app:annotation}
	
	Human annotators evaluated sessions using a 1--5 Likert scale based on:
	\begin{itemize}
		\item local relevance,
		\item historical consistency,
		\item logical coherence,
		\item contradiction severity,
		\item conversational usefulness.
	\end{itemize}
	
	Annotators were instructed to evaluate conversations globally rather than independently per turn, with emphasis on long-horizon consistency, contradiction severity, and whether later responses respected previously established conversational commitments.
	
	Each session was independently scored by three annotators. Final scores were obtained through averaging. Inter-annotator agreement was measured using Cohen's $\kappa$.
\end{document}